\documentclass[letterpaper]{article}

\usepackage{amsthm}
\usepackage{times}
\usepackage{helvet}
\usepackage{courier}
\usepackage{subfig}
\usepackage{url}
\usepackage[pdftex]{graphicx}
\frenchspacing
\setlength{\pdfpagewidth}{8.5in}
\setlength{\pdfpageheight}{11in}
\usepackage[utf8]{inputenc}
\usepackage[T1]{fontenc}
\usepackage{booktabs}
\usepackage{amsfonts}
\usepackage{nicefrac}
\usepackage{microtype}
\usepackage{tikz}
\usepackage{grffile}
\usetikzlibrary{shapes.misc, positioning}
\usepackage{thmtools,thm-restate}
\usepackage{parskip}
\usepackage{authblk}
\usepackage{tcolorbox}
\usepackage{wrapfig}
\usepackage{amsmath}
\usepackage{amssymb}
\usepackage{mathtools}
\usepackage[numbers]{natbib}
\usepackage{multirow}
\usepackage{color}
\usepackage{hyperref}
\usepackage{colortbl}
\usepackage{algpseudocode}
\usepackage[none]{hyphenat}
\usepackage{floatrow}
\usepackage{macros}
\usepackage[pro]{fontawesome5}
\usepackage{basic}

\hypersetup{
    colorlinks=true,
    linkcolor=purple,
    urlcolor=teal,
    linktoc=all,
    citecolor=teal
}

\newfloatcommand{capbtabbox}{table}[][\FBwidth]

\title{Evaluating Sample Utility for Efficient Data Selection \\by Mimicking Model Weights}

\author[$1$]{Tzu-Heng~Huang\thanks{\;\faApple\;Work done during an internship at Apple.}}
\author[$2$]{Manjot~Bilkhu\thanks{\;Corresponding author. Email: mbilkhu@apple.com \& thuang273@wisc.edu.}}
\author[$1$]{John~Cooper} 
\author[$1$]{Frederic~Sala}
\author[$2$]{Javier~Movellan}

\affil[$1$]{University of Wisconsin-Madison}
\affil[$2$]{Apple Inc.}
\affil[$1$]{\footnotesize{\texttt{\{thuang273, jfcooper2, fredsala\}@wisc.edu}}}
\affil[$2$]{\footnotesize{\texttt{\{mbilkhu, movellan\}@apple.com}}}

\begin{document}

\maketitle

\begin{abstract}
Large-scale web-crawled datasets contain noise, bias, and irrelevant information, necessitating data selection techniques.
Existing methods depend on hand-crafted heuristics, downstream datasets, or require expensive influence-based computations---all of which limit scalability and introduce unwanted data dependencies.
To address this, we introduce the Mimic Score, a simple and geometry-based data-quality metric that evaluates utility by measuring alignment between a sample’s gradients and a target direction induced by a pre-trained reference model.
This leverages readily available model weights, avoids needing validation datasets, and incurs minimal computational overheads.
Building on this metric, we propose Grad-Mimic, a two-stage framework that re-weights samples online to accelerate training and aggregates sample utilities offline to construct effective data filters.
Empirically, we show that using mimic scores to guide training improves data efficiency, accelerates convergence, yields consistent performance gains across six image datasets, and enhances CLIP models with 20.7\% fewer training steps.
Additionally, mimic score-based filters augment existing filtering techniques, enabling improved CLIP models trained with 4.7 million fewer samples\footnote{
This work appears in the Proceedings of the 43rd International Conference on Machine Learning (ICML 2026). 
It was also selected as an \textbf{Oral} paper at the ICML 2025 Workshop on DataWorld: Unifying Data Curation Frameworks Across Domains.
}.
Our code is released \href{https://github.com/zihengh1/Grad-Mimic}{here}.
\end{abstract}

\section{Introduction}

Large-scale web-crawled datasets are fundamental to the success of modern multimodal models such as AIMv2~\cite{fini2024multimodal}, SigLIP2~\cite{tschannen2025siglip}, OpenAI CLIP~\cite{radford2021learning}, and ALIGN~\cite{jia2021scaling}.
These datasets provide vast quantities of information---but inevitably contain noise, biases, or misleading content.
To mitigate these, data selection---ruling out undesirable samples---has emerged as a critical step in the development pipeline~\cite{albalak2024survey, bai2024survey}.
For example, the FineWeb dataset, containing 15 trillion text tokens, undergoes eight carefully designed filtering steps to enhance model performance~\cite{penedo2024fineweb}.

Current selection strategies; however, are far from satisfactory.
We can broadly divide them into two categories:
model-free methods, such as those based on hand-crafted heuristics or semantic similarity to downstream datasets~\cite{xie2023data, wang2024cliploss} often \emph{require costly trial‑and‑error} to establish selection rules and \emph{add unwanted data dependencies}.
These methods also lack the granularity needed to assess individual utility, leading to suboptimal training outcomes.
In contrast, model-based techniques, such as those building specialized filtering networks~\cite{fang2023data, thakkar2023self, chen2024automated, wettig2024qurating, wang2025ultra}, scoring samples through a reference model loss~\cite{mindermann2022prioritized, lin2024rho}, or using influence functions ~\cite{yu2024mates, wang2024greats, xia2024less, wu2024icons}, \emph{increase pipeline complexity} and are often \emph{computationally expensive at scale.}
These shortcomings motivate the need for a \emph{simpler, dataset-agnostic, compute-efficient} mechanism for identifying high-value data.

\begin{figure*}[t!]
    \includegraphics[width=\linewidth]{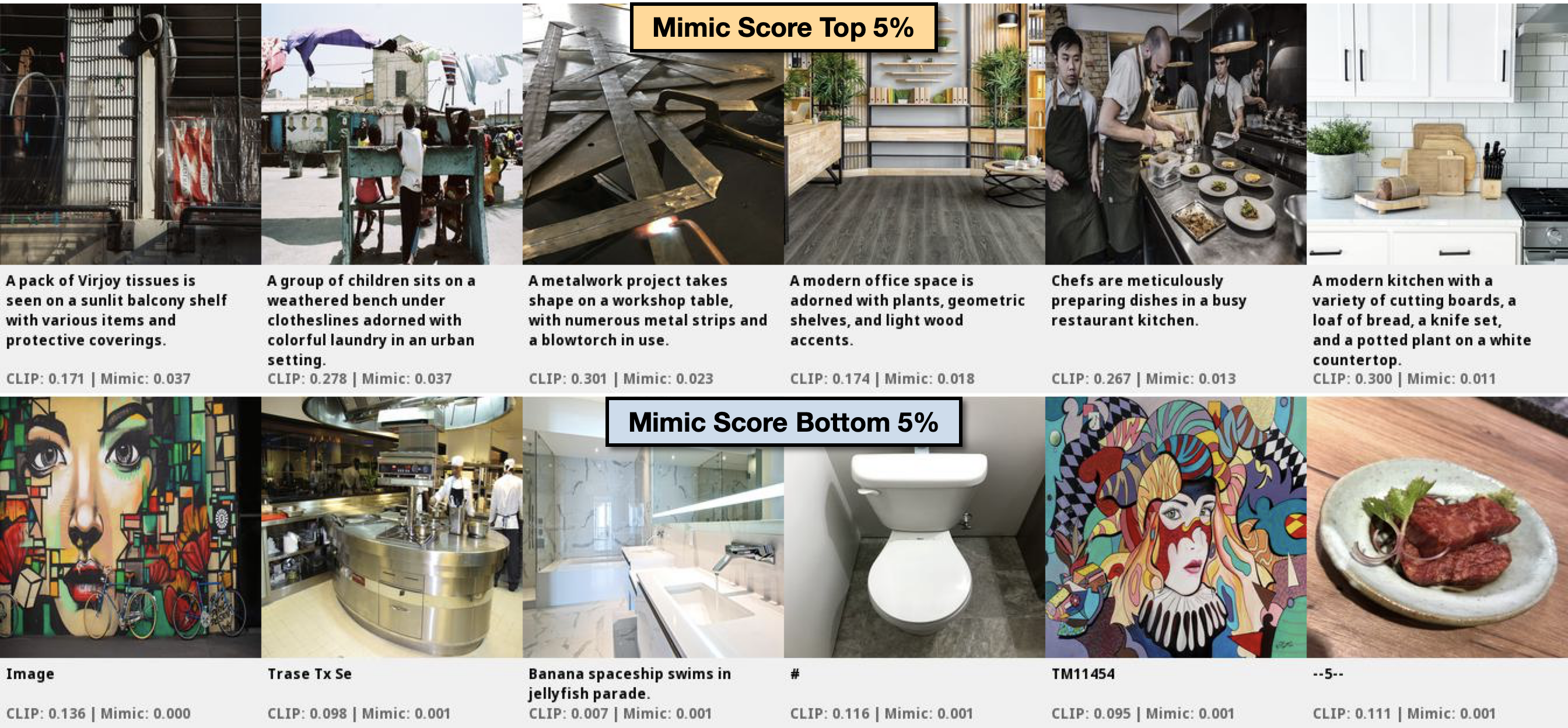}
    \centering
    \caption{
    \textbf{High-value vs. low-value samples identified by Mimic Score}: 
    Random samples from the top 5\% (first row) and bottom 5\% (second row) of web-crawled data, ranked by their mimic score.
    Each image includes its caption and CLIP score (higher indicates better quality).
    Mimic scores align closely with CLIP scores.
    High-value samples generally have detailed captions and coherent visuals, while low-value ones carry short captions and misaligned content.
    }
    \label{fig:top10_and_bottom10}
\end{figure*}

We introduce the \textbf{\emph{Mimic Score}}, a simple data-quality metric that quantifies each training sample's contribution to effective weight updates.
We argue that samples whose gradients potentially pull the model in undesirable directions---and thus misguide weight updates---should be down‑weighted or discarded.
To establish a desirable weight update, we leverage high-quality artifacts currently public: \emph{pre-trained weights}, located in a more optimal part of the weight space, as \emph{our proxy}.
The mimic score is computed for each sample by measuring the alignment between its negative gradient and a direction pointing to this reference model.
A high alignment score indicates a sample can steer the model toward this preferred region; a low one diverts it.

Using weight-space geometry to evaluate sample utility offers multiple benefits.
\textbf{First}, the prevailing \textbf{\emph{access asymmetry}} in the modern AI landscape makes this approach more practical than ever: general-purpose well-trained models are increasingly released to the public, but the carefully curated datasets used to produce them remain proprietary.
\textbf{Second}, unlike methods that rely on gradients computed over a clean validation split~\cite{wang2024greats, xia2024less, wu2024icons}, mimic score \textbf{\emph{avoids extra validation computations, bypassing the difficulties of curating such datasets and ensuring its quality}}.
\textbf{Third}, many reference model-based techniques compute loss difference through additional model inference~\cite{mindermann2022prioritized, lin2024rho}.
Our mechanism instead is a much simpler operation based on the discrepancy of model weights, which \textbf{\emph{substantially reduces compute overheads}}.

Building on this metric, we develop \emph{\textbf{Grad-Mimic}}, a two‑stage data selection framework:
\begin{itemize}[nosep,topsep=0pt,leftmargin=*]
    \item \textbf{Online Batch Re-weighting.} 
    During training, Grad-Mimic computes mimic scores on the fly and prioritizes samples to learn, enhancing data efficiency and accelerating convergence.
    \item \textbf{Offline Sample Selection.} 
    After training, Grad-Mimic aggregates per-sample's mimic scores across training steps to estimate an overall sample utility.
    Such combined estimate serves as a reliable filter, enabling the construction of a smaller, higher‑quality dataset for future use.
\end{itemize}

We validate the capabilities of Grad-Mimic in a broad range of scenarios.
For \textbf{{mislabeled sample detection}}, we show in a controlled setting, Grad-Mimic accurately identifies noisy samples, and the derived mimic scores correlate to overall dataset quality, \textbf{\emph{achieving a Pearson correlation of 0.903}}.
By down-weighting undesirable samples during training, Grad-Mimic consistently improves data efficiency and model performance across six image datasets, while reducing runtime overheads by 2.6$\times$ compared to competing methods.
For \textbf{{large-scale data curation}}, we scale our testbed to web-crawled data by training CLIP models from scratch at DataComp scale~\cite{gadre2023datacomp} (e.g., 10M+ and 100M+ samples).
Using pre-trained weights as our reference, mimic scores effectively navigate training, \textbf{\emph{reducing 20.7\% training steps to convergence}}.
The resulting filter complements existing selection strategies, improving CLIP model performance with \textbf{\emph{4.7M fewer training samples}}.

We summarize our contributions as follows:
\begin{itemize}[nosep,topsep=0pt,leftmargin=*]
    \item \textbf{A New Data-quality Metric}: 
    We exploit access asymmetry and introduce the Mimic Score, which distills weight-space geometry into data-quality insights.
    \item \textbf{An Efficient Data Selection Framework}: 
    We present Grad-Mimic, a framework that prioritizes high-value samples to learn, enhances data efficiency, accelerates convergence, and automates effective data selection.
    \item \textbf{Substantial Computational Gains.}
    Compared to prior model-based methods, the mimic score computation incurs lower runtime and reduced GPU memory usage.
    \item \textbf{Reliable Ensemble Filtering.}
    Aggregated mimic scores provide reliable estimates of dataset quality and model performance gains, and further strengthen existing filters.
    \item \textbf{Extensive Ablation Studies.}
    We provide extensive studies, studying our framework generalization under diverse design choices and the impact of reference model quality.
\end{itemize}
\section{Related Work}
Our work explores using weight-space geometry to evaluate sample utility, intersecting three key areas: \textbf{(i)} data selection, \textbf{(ii)} multimodal data curation, and \textbf{(iii)} weak supervision.

\textbf{Data Selection.}
Data selection techniques can be categorized into group- and sample-level approaches.
Group-level approaches focus on optimizing domain mixtures to curate high-quality datasets~\cite{fan2023doge, xie2024doremi, chen2024aioli, ge2025r}.
Sample-level methods, which are the focus of Grad-Mimic, aim to provide a score measuring sample utility and rule out undesired samples.
Prior research has explored various gradient-based strategies, such as identifying impactful samples through gradient magnitudes~\cite{paul2021deep}, analyzing gradient similarities across training batches~\cite{sedova2023learning}, selecting key samples that can capture full training~\cite{killamsetty2021grad, mirzasoleiman2020coresets}, and using sample influence computed from noise-free validation sets~\cite{wang2024greats, xia2024less, wu2024icons}.
These methods; however, are sensitive to noisy batch gradients and often incur extra computation.
\textbf{\emph{We sidestep both with the help from a reference model and measure sample's directional compatibility in the weight-space geometry.}}

\textbf{Data Curation for Multimodal Models.}
Multimodal models rely on large-scale web datasets~\cite{thomee2016yfcc100m, schuhmann2022laion, gadre2023datacomp}; however, quantity comes at a price: unconstrained crawling amplifies noise and requires careful curation.
Previous approaches have included selecting samples based on semantic similarity to the downstream evaluation datasets~\cite{xie2023data, wang2024cliploss, thrush2024improving}, using semantic deduplication~\cite{abbas2023semdedup}, training specialized filtering networks~\cite{fang2023data, chen2024automated, wettig2024qurating, thakkar2023self}, and using data influence functions~\cite{lin2024data, yu2024mates}.
While effective, they require access to other datasets, introduce training complexities, or demand substantial compute.
Grad-Mimic offers a more efficient alternative: \textbf{\emph{distilling readily available weights into data insights for sample identification}}.

\textbf{Weak Supervision.}
Weak supervision is a framework to construct labeled datasets by combining multiple noisy label estimates~\cite{ratner2016data, ratner2017snorkel, ratner2019training, fu2020fast}.
These estimates can come from sources such as heuristic labeling rules, domain knowledge, or pre-trained models~\cite{huang2024alchemist, huang2023scriptorium}, often encoded as labeling functions.
Labeling function outputs are modeled and aggregated to produce a probabilistic labeling decision.
Weak supervision has demonstrated success in diverse domains~\cite{hooper2020cut, fries2019weakly, khattar2019multi, roberts2022autows, shin2021universalizing}.
Prior works focus on weak label aggregation to construct datasets.
We adopt it but for a new purpose: \textbf{\textit{modeling accumulated sample utility assessments for a combined selection decision}}.

\section{Evaluating Sample Utility}

Our goal is to quantify training sample contributions to learning process when we \textbf{(i)} have a new model to be trained and \textbf{(ii)} a well-trained reference model is available.
Our principle is that \emph{samples that potentially pull the model in an undesirable direction, thereby misdirecting weight updates, should be considered low-value.} 
We start with notation, then explain how an intermediate scoring metric is derived and used inside our framework, Grad-Mimic (Sec.~\ref{sec:selection}).

\textbf{Notation.} 
Let $D=\{s_i\}_{i=1}^n$ denote a dataset of $n$ samples drawn from a distribution $\mathcal{D}$ supported on the space $\mathcal{S}$.
At training step $t$, model parameters $\theta_t$, are iteratively optimized using the dataset $D$.
While our framework supports various training settings, we focus on supervised learning for clarity.
We assume $\mathcal{S} = \mathcal{X} \times \mathcal{Y}$, where $\mathcal{X}$ is the input space and $\mathcal{Y}$ is the label space.
Each sample $s_i$ can be expressed as $(x_i, y_i)$, where noise may be present either in the instance $x_i$ or in the label $y_i$.
The empirical loss across all the samples is defined as 
$\frac{1}{n}\sum_{i=1}^n \ell(x_i, y_i)$, and each sample's gradient with respect to the model parameters $\theta_t$ is written as: $g_{i, t} := \nabla_{\theta_t} \ell(x_i, y_i)$.
A standard update to model parameters is $\theta_{t+1} := \theta_{t} - \eta \frac{1}{b} \sum_{i=1}^{b} g_{i, t}$, where $\eta$ is the learning rate and $b$ is the batch size.

\begin{figure*}[t]
    \includegraphics[width=\linewidth]{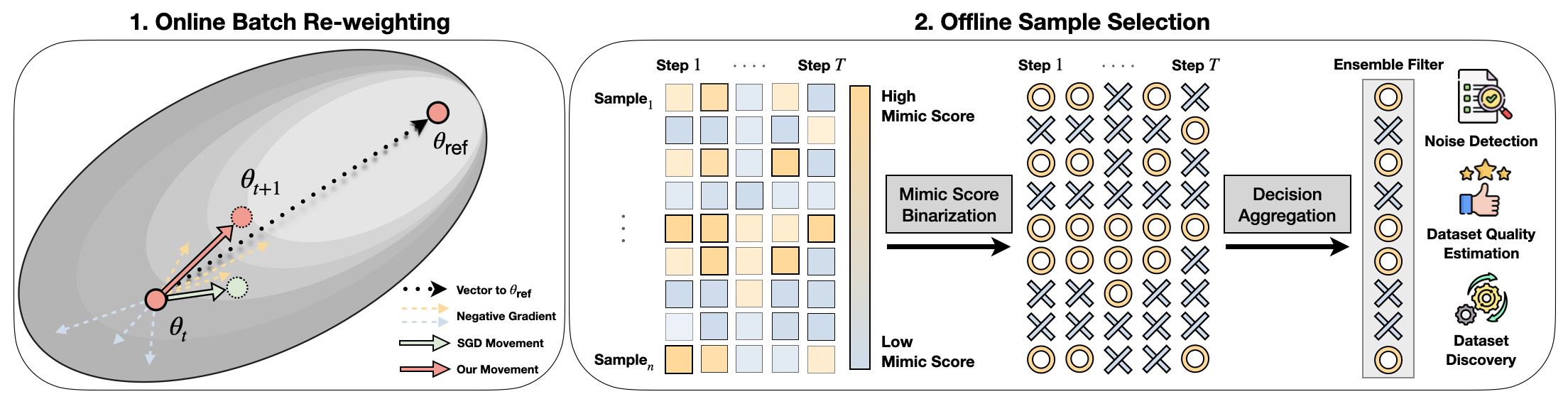}
    \centering\caption{\textbf{Grad-Mimic Two-stage Workflow}:
    During training, Grad-Mimic measures the alignment between each sample’s negative gradient and an induced vector from pre-trained reference model, then uses normalized alignment to re-weight gradients.
    After training, these alignment signals, mimic scores, are aggregated to identify low-value samples and used to construct an ensemble filter.
    }
    \label{fig:framework}
\end{figure*}

\textbf{Mimic Score Calculation.}
To evaluate whether a sample's gradient pulls the model in an undesirable direction, we use the geometry information from a pre-trained model's weights as our proxy guide.
These reference weights, denoted by $\theta_{\text{ref}}$, are assumed to reside in a more optimal part of the weight space, such that $\ell(\theta_{\text{ref}}) < \ell(\theta_t)$.
We use the vector from the current model's weight space $\theta_t$ to $\theta_{\text{ref}}$ to measure each sample utility in approximating a better weight configuration ($\theta_{\text{ref}}$).
In practice, these weights can be layer-specific, e.g., model weights in the last layer, which usually store more informative features than earlier layers~\cite{ghiasi2022vision, gandelsman2023interpreting}.

Reference weights can be obtained either by (i) training a model on a relatively small, hold-out dataset until it meets the desired performance~\cite{lin2024rho, mindermann2022prioritized}, if resource permits, or working on new domains where no pre-trained weights exist; (ii) or, more efficiently, by using an existing well‑trained public model, which can sidestep \emph{both data‑access constraints and costly training}.

At each training step $t$, we define a \emph{time-varying} vector that points toward the reference model weights, given by
$v_t := \theta_{\text{ref}} - \theta_t$.
We examine how each sample's negative gradient $-g_{i, t}$, intended for updating model weights, aligns with vector $v_t$.
We measure this alignment degree by considering both the direction and strength of the negative gradient.
Specifically, we compute the projection length of $-g_{i, t}$ onto $v_t$, yielding an alignment score $m_{i,t}$, computed as follows
\begin{equation}
    \text{Mimic\_Score}(s_{i, t}) := m_{i, t} = \frac{\langle -g_{i, t}, v_t \rangle}{\|v_t\|}.
\end{equation}
This alignment score, named \emph{Mimic Score}, reflects \emph{\textbf{how much a sample's directional compatibility can drive the model closer to a preferred weight space.}} 
A sample having a lower mimic score suggests it has limited utility in guiding effective weight updates, making it a potential candidate for exclusion from future training.

\section{Data Selection Framework}
\label{sec:selection}

Building on this simple data-quality metric, we package it into Grad-Mimic, a two-stage data selection framework that first \emph{prioritizes which samples to learn} (Sec.~\ref{sec:online}) and then \emph{combines estimated utilities} to design \emph{an effective ensemble data filter} (Sec.~\ref{sec:offline}).
Grad-Mimic workflow is shown in Figure~\ref{fig:framework}.

\subsection{Stage 1: Online Batch Re-weighting.}
\label{sec:online}
We first use mimic scores to aid data efficiency by \emph{re-weighting}.
Unlike standard gradient descent, which assigns uniform weights to all samples in a mini-batch, Grad-Mimic amplifies helpful samples and down-weights unhelpful ones.

To achieve this, each sample’s mimic score is first normalized using the softmax function with a temperature parameter, $\tau$.
The normalized score for a sample $s_{i,t}$ in a batch of size $b$ is computed as
\begin{equation}
    \overline{m}_{i, t} = 
    \frac{e^{m_{i, t} / \tau}}{\sum_{j=1}^b e^{m_{j, t} / \tau}}.
\end{equation}
Then, the weight update step in Grad-Mimic is modified as
\begin{align}
    & \theta_{t+1} := \theta_t - \eta  \sum_{i=1}^b \overline{m}_{i, t} \cdot g_{i, t}.
\end{align}
The temperature $\tau$ controls the sensitivity of sample re-weighting, allowing us to adjust how sharply the model prioritizes samples.
A lower temperature results in a more aggressive focus on the most aligned samples, while a higher one encourages to converge to gradient descent.

\textbf{Theoretical Analysis.} 
We provide an analysis for \emph{effective batch re-weighting} when training on a noisy dataset in Appendix~\ref{app:convergence}.
In particular, we study conditions where Grad-Mimic's online method converges faster than standard gradient descent.

\subsection{Stage 2: Offline Sample Selection.}
\label{sec:offline}
The second stage of Grad-Mimic converts computed mimic scores---estimated sample utilities---into a unified assessment.
This enables us to curate a higher-quality training dataset with a constructed data filter.

\textbf{Mimic Score Binarization.}
Grad-Mimic first gathers normalized mimic scores from each sample at every training step.
These indicate their intermediate contributions to learning.
We convert these continuous scores into \emph{binary retain/discard decisions} and explore three practical schemes:
\begin{itemize}[nosep,topsep=0pt,leftmargin=*]
    \item \emph{Threshold-based}: A sample is chosen if its mimic score exceeds a defined threshold.
    A natural choice can be set as $1 / b$ (indicating greater than uniform).
    \item \emph{1D Clustering}: Samples are categorized into two groups using clustering (e.g., $k$-means~\cite{wu1991optimal} or Gaussian mixture), allowing unsupervised identification based on their assigned clusters.
    \item \emph{Top-k Percent}: Samples are ranked by their mimic score, and the top-$k$ percent are chosen.
    The value of $k$ can be adjusted based on the available training budget.
\end{itemize}

\textbf{Decision Aggregation.}
While binarized decisions provide instantaneous utility estimates at each training step, single-step assessments can be noisy (e.g., using early steps).
Moreover, naive aggregation using majority vote may fail to capture learning dynamics.
To address this, we treat each step's binarized assessments as weak votes and employ \emph{weak supervision techniques}~\cite{ratner2016data, ratner2017snorkel, ratner2019training, fu2020fast} to combine filtering decisions across training steps.
We start by learning a generative model to estimate the reliability of each step's assessments~\cite{ratner2019training}.
Once established, this model produces probabilistic aggregated decisions.
This consensus can mitigate noise from inaccurate local gradients and models how a sample's utility evolves.
Empirically, we employ the aggregation step from Snorkel framework~\cite{ratner2017snorkel}, a widely-adopted method in the weak supervision community.

\paragraph{Broad Downstream Applications.}
Ultimately, the combined filtering decisions yield a curated subset that can be used in subsequent training runs.
In addition to dataset refinement, we can leverage their statistics to estimate dataset quality by either calculating \emph{retention rate} (fraction of samples retained) or \emph{averaged mimic score}.
Moreover, the identified high-value samples provide data-quality insights \emph{enable retrieval of similar samples to expand the current data pool}.
Finally, Appendix~\ref{app:reverse engineering} investigates mimic score's potential to infer training dataset used to produce a reference model.
We discuss weak supervision setups and summarize algorithms in both online and offline stages in Appendix~\ref{app:algorithm}.
\section{Experiments}

We evaluate Grad-Mimic's effectiveness using two experimental setups across diverse scales and domains.
First, we test in a controlled setting, injecting label noise into standard datasets (Sec.~\ref{controllable exp}).
Then we evaluate on large-scale web-crawled datasets (Sec.~\ref{datacomp exp}).
Our goals are to confirm key claims in both settings:
\begin{enumerate}[nosep,topsep=0pt,leftmargin=*,label=\textbf{C\arabic*.}, leftmargin=*]
    \item \textbf{Effectiveness of Weight-Space Geometry.}
    Pre-trained model weights can act as a reliable reference for developing a new data-quality metric.
    \item \textbf{Enhanced Data Efficiency}: 
    Prioritizing samples using mimic scores improves data efficiency and model performance, and accelerates convergence.
    \item \textbf{Minimal Computational Overhead}:
    Compared to prior model-based methods, mimic score computation incurs less overheads and reduced GPU memory usage.
    \item \textbf{Accurate Sample Identification}: 
    Mimic scores precisely detect noisy samples, and their aggregated statistics highly correlate with overall dataset quality.
    \item \textbf{Effective Ensemble Filtering}: 
    With limited sample assessments, Grad-Mimic can aggregate effective filtering decisions.
    The resulting ensemble filter complements existing data selection strategies.
\end{enumerate}

\subsection{Mislabeled Sample Detection}
\label{controllable exp}

\textbf{Setups.}
We begin with a controlled experiment by adding various levels of label noise to six image classification datasets.
They are DTD~\cite{cimpoi2014dtd}, Flowers102~\cite{4756141}, STL10~\cite{coates2011stl10}, OxfordIIIT Pet~\cite{Parkhi2012CatsAD}, CIFAR10, and CIFAR100~\cite{Krizhevsky2009LearningML}.
We fine-tune a ViT-B/16 model~\cite{dosovitskiy2020image} on each noisy dataset under two training regimes:
(i) \emph{linear probing}, where only the final layer is tuned, and
(ii) \emph{full fine-tuning}, where gradients of all model parameters are re-weighted according to mimic scores.
We normalize mimic scores with a temperature of 0.5 and use a batch size of 32.
We simulate pre-trained reference models by training ViT-B/16 models on a noise-free split and use the \emph{last-layer weights} as our reference to navigate training.
We detail more training configurations in Appendix~\ref{app:experiment}.

\begin{table*}[t!]
\caption{
\textbf{Stage 1 Results in Mislabeled Sample Experiment}: 
Using mimic scores can effectively down-weight mislabeled samples during training, improving data efficiency and offering the greatest denoising capability.
}
\resizebox{\textwidth}{!}{%
\begin{tabular}{@{}l|ccc|ccc|ccc|ccc|ccc|ccc|ccc@{}}
\toprule
 & \multicolumn{3}{c|}{\textbf{DTD}} & \multicolumn{3}{c|}{\textbf{Flowers102}} & \multicolumn{3}{c|}{\textbf{STL10}} & \multicolumn{3}{c|}{\textbf{Oxford-IIIT Pet}} & \multicolumn{3}{c|}{\textbf{CIFAR10}} & \multicolumn{3}{c|}{\textbf{CIFAR100}} & \multicolumn{3}{c}{\cellcolor[HTML]{FFD99A}\textbf{Average}} \\ \midrule
Noise Level & 0.4 & 0.5 & 0.6 & 0.4 & 0.5 & 0.6 & 0.4 & 0.5 & 0.6 & 0.4 & 0.5 & 0.6 & 0.4 & 0.5 & 0.6 & 0.4 & 0.5 & 0.6 & 0.4 & 0.5 & 0.6 \\ \midrule
\rowcolor[HTML]{EFEFEF} 
SGD & 51.91 & 47.71 & 41.44 & 28.64 & 21.50 & 15.43 & 96.26 & 95.49 & 93.56 & 87.33 & 85.55 & 83.51 & 92.86 & 92.14 & 91.19 & 75.56 & 74.38 & 73.25 & 72.09 & 69.46 & 66.40 \\
Grad-Norm & 36.22 & 30.59 & 25.32 & 18.23 & 13.90 & 10.49 & 68.70 & 59.55 & 49.68 & 69.47 & 60.02 & 48.13 & 88.02 & 85.61 & 81.48 & 71.60 & 70.51 & 69.27 & 58.71 & 53.36 & 47.40 \\
\rowcolor[HTML]{EFEFEF} 
AGRA & 41.81 & 36.28 & 31.49 & 41.81 & 36.28 & 31.49 & 96.19 & 95.15 & 93.11 & 85.25 & 82.53 & 78.39 & 92.51 & 92.01 & 90.99 & 72.50 & 71.30 & 69.69 & 71.68 & 68.93 & 65.86 \\
Grad-Match & 51.81 & 47.55 & 41.33 & 27.00 & 20.21 & 14.90 & 96.06 & 95.34 & 93.44 & 86.86 & 85.75 & 83.18 & 92.62 & 92.04 & 91.17 & 75.39 & 74.46 & 73.18 & 71.62 & 69.23 & 66.20 \\
\rowcolor[HTML]{EFEFEF} 
Rho-Loss & \multicolumn{1}{l}{\cellcolor[HTML]{EFEFEF}54.10} & \multicolumn{1}{l}{\cellcolor[HTML]{EFEFEF}52.39} & \multicolumn{1}{l|}{\cellcolor[HTML]{EFEFEF}48.46} & \multicolumn{1}{l}{\cellcolor[HTML]{EFEFEF}41.55} & \multicolumn{1}{l}{\cellcolor[HTML]{EFEFEF}35.71} & \multicolumn{1}{l|}{\cellcolor[HTML]{EFEFEF}30.02} & \multicolumn{1}{l}{\cellcolor[HTML]{EFEFEF}97.10} & \multicolumn{1}{l}{\cellcolor[HTML]{EFEFEF}96.83} & \multicolumn{1}{l|}{\cellcolor[HTML]{EFEFEF}96.49} & \multicolumn{1}{l}{\cellcolor[HTML]{EFEFEF}88.53} & \multicolumn{1}{l}{\cellcolor[HTML]{EFEFEF}87.79} & \multicolumn{1}{l|}{\cellcolor[HTML]{EFEFEF}86.86} & \multicolumn{1}{l}{\cellcolor[HTML]{EFEFEF}94.07} & \multicolumn{1}{l}{\cellcolor[HTML]{EFEFEF}93.83} & \multicolumn{1}{l|}{\cellcolor[HTML]{EFEFEF}\textbf{93.82}} & \multicolumn{1}{l}{\cellcolor[HTML]{EFEFEF}76.82} & \multicolumn{1}{l}{\cellcolor[HTML]{EFEFEF}76.05} & \multicolumn{1}{l|}{\cellcolor[HTML]{EFEFEF}75.93} & \multicolumn{1}{l}{\cellcolor[HTML]{EFEFEF}75.36} & \multicolumn{1}{l}{\cellcolor[HTML]{EFEFEF}73.77} & \multicolumn{1}{l}{\cellcolor[HTML]{EFEFEF}71.93} \\ \midrule
\textbf{Grad-Mimic} & \textbf{54.68} & \textbf{54.10} & \textbf{50.43} & \textbf{42.75} & \textbf{37.10} & \textbf{31.71} & \textbf{97.16} & \textbf{97.00} & \textbf{96.90} & \textbf{88.80} & \textbf{88.25} & \textbf{87.14} & \textbf{94.15} & \textbf{93.92} & 93.80 & \textbf{77.24} & \textbf{76.82} & \textbf{76.05} & \textbf{75.80} & \textbf{74.53} & \textbf{73.01} \\ \bottomrule
\end{tabular}%
}
\label{tab:stage 1 results in moderate dataset}
\end{table*}

\begin{table*}[t!]
\centering
\caption{\textbf{Stage 2 Results in Mislabeled Sample Experiment}:
We report detection results using F1-scores.
Grad-Mimic accurately identifies mislabeled ones across datasets and binarization methods.
It maintains strong performance under varying noise levels.
}
\resizebox{\textwidth}{!}{%
\begin{tabular}{@{}l|ccc|ccc|ccc|ccc|ccc|ccc@{}}
\toprule
 & \multicolumn{3}{c|}{\textbf{DTD}} & \multicolumn{3}{c|}{\textbf{Flowers102}} & \multicolumn{3}{c|}{\textbf{STL10}} & \multicolumn{3}{c|}{\textbf{Oxford-IIIT Pet}} & \multicolumn{3}{c|}{\textbf{CIFAR10}} & \multicolumn{3}{c}{\textbf{CIFAR100}} \\ \midrule
Noise Level & 0.4 & 0.5 & 0.6 & 0.4 & 0.5 & 0.6 & 0.4 & 0.5 & 0.6 & 0.4 & 0.5 & 0.6 & 0.4 & 0.5 & 0.6 & 0.4 & 0.5 & 0.6 \\ \midrule
\rowcolor[HTML]{EFEFEF} 
Threshold ($1 / 32$) & 85.13 & 88.91 & 91.82 & 90.93 & 94.9 & \textbf{96.92} & 90.31 & 94.11 & \textbf{97.21} & 94.41 & 96.86 & \textbf{97.93} & 98.37 & \textbf{98.19} & 97.19 & 97.66 & \textbf{97.57} & 95.80 \\
1D $k$-means & 77.13 & 76.68 & 77.17 & 84.45 & 87.62 & 87.47 & 78.90 & 83.04 & 78.82 & 95.80 & 95.49 & 93.73 & \textbf{98.62} & 98.17 & \textbf{97.56} & \textbf{98.10} & 97.46 & \textbf{96.62} \\
\rowcolor[HTML]{EFEFEF} 
GMM & \textbf{97.85} & \textbf{96.72} & \textbf{95.68} & \textbf{98.39} & \textbf{96.96} & 94.21 & \textbf{97.96} & \textbf{97.16} & 95.69 & \textbf{98.04} & \textbf{97.07} & 95.86 & 95.70 & 92.89 & 88.62 & 95.86 & 94.31 & 92.55 \\ \bottomrule
\end{tabular}%
}
\label{tab:stage 2 results for noise detection}
\end{table*}

\begin{figure*}[t!]
    \begin{center}
    \includegraphics[width=\linewidth]{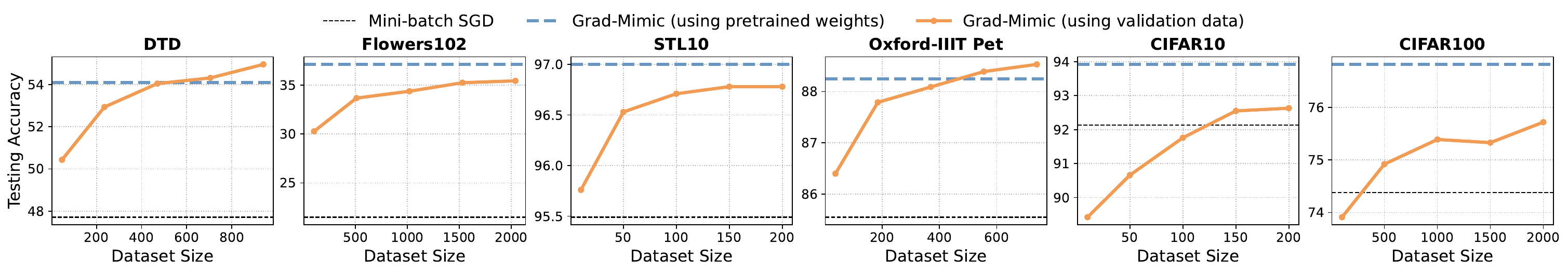}
    \end{center}
    \centering\caption{
    \textbf{
    Grad-Mimic outperforms influence function-based methods:} 
    Leveraging the geometric information of the reference model as a selection guide provides several advantages, including \emph{higher effectiveness, lower computational cost, and greater accessibility}.
    }
    \label{fig:using golden dataset as guide}
\end{figure*}

After training, we binarize samples' mimic scores in two ways: setting one over batch size ($1 / b$) as our threshold and using $k$-means and GMM to cluster.
We then aggregate binarized outputs across training steps using Snorkel framework~\cite{ratner2017snorkel}.

\textbf{Expected Results.}
We expect to validate that pre-trained reference model weights can serve as a reliable selection guide.
Mimic scores can identify low-value mislabeled samples, improving data efficiency through online re-weighting.

\textbf{Baselines.}
We compare Grad-Mimic's Stage 1 online method against six existing works that use gradient information for sample prioritization.
We consider:
(i) \emph{Mini-batch SGD}, no selection method, updates weights by averaging gradients within the mini-batch,
(ii) \emph{GraNd}~\cite{paul2021deep} re-weights samples based on their gradient norm, prioritizing data that induce greater changes,
(iii) \emph{AGRA}~\cite{sedova2023learning} computes the cosine similarity of each sample gradient to an average gradient from another random batch, then excludes outlier samples,
(iv) \emph{Grad-Match}~\cite{killamsetty2021grad} adapts gradients by solving a subset selection problem to identify key samples within each batch, and
(v) \emph{influence function-based methods}~\cite{wang2024greats, xia2024less}, approximating data influence by measuring gradient similarities with a noise-free validation dataset.
Moreover, we include (vi) another reference model-based approach, \emph{Rho-Loss}~\cite{mindermann2022prioritized, lin2024rho}, which uses a pre-trained model to derive excess losses and prioritizes training on worth learning ones.

\textbf{Stage 1 Results.}
We report two stage results separately.
For Stage 1, we evaluate online re-weighting effectiveness using testing accuracy under linear probing.
Performance across all datasets and noise levels is shown in Table~\ref{tab:stage 1 results in moderate dataset}.
Additional results under full fine-tuning are reported in Table~\ref{tab:stage 1 full results in moderate dataset} in Appendix~\ref{app:more exp}.
We compare against influence function-based methods in Figure~\ref{fig:using golden dataset as guide}, where we introduce 50\% label noise and vary the size of validation set used for computing sample influence.
A detailed description of this experimental setup is offered in Appendix~\ref{app:comparison}.
Furthermore, Appendix~\ref{app:more exp} investigates the stability of temperature $\tau$, the layer choice for mimicking, generalization to different reference models, scenario when facing new domains, and the impact of reference model quality.

Results in Table~\ref{tab:stage 1 results in moderate dataset} show that Grad-Mimic consistently achieves the highest average performance across different noise levels.
This demonstrates its ability to \textbf{\emph{more effectively identify and prioritize valuable samples, improving denoising capability}}.
In particular, Grad-Mimic outperforms reference model-based method Rho-Loss, which requires loading the full reference model and performing additional forward passes to compute excess loss.
In contrast, Grad-Mimic evaluates sample utility directly from discrepancies in the last-layer model weights, a simpler matrix subtraction.

When compared against influence function-based approach, which approximates by using a golden validation set, Grad-Mimic achieves higher accuracy while avoiding their inherent limitations.
These methods introduce dataset dependencies, incur additional gradient computation from validation set, and face challenges in ensuring its quality, as noted in GREATS~\cite{wang2024greats}.
For instance, on the Flowers102 and CIFAR100 dataset, achieving satisfactory results required over 2{,}000 correctly labeled samples---\emph{\textbf{yet still fell short of Grad-Mimic’s performance}}.
These findings all validate our claims \textbf{C1 \& C2}: \textbf{\emph{using the alignment with a target region in the weight space represented by a high-performing model can derive sample quality and enable better model training and improved data efficiency}}.

\paragraph{Efficiency Advantages.}
In addition to the improved performance, Grad-Mimic offers significant efficiency benefits, \textbf{\emph{including lower compute overheads, reduced GPU memory usage, and fewer steps to convergence}}.
We offer this efficiency analysis in Appendix~\ref{app:efficiency}.
Results support our claim \textbf{C3}, reducing computational overheads by a factor of 2.6.

\begin{figure*}[t]
    \begin{center}
    \includegraphics[width=\linewidth]{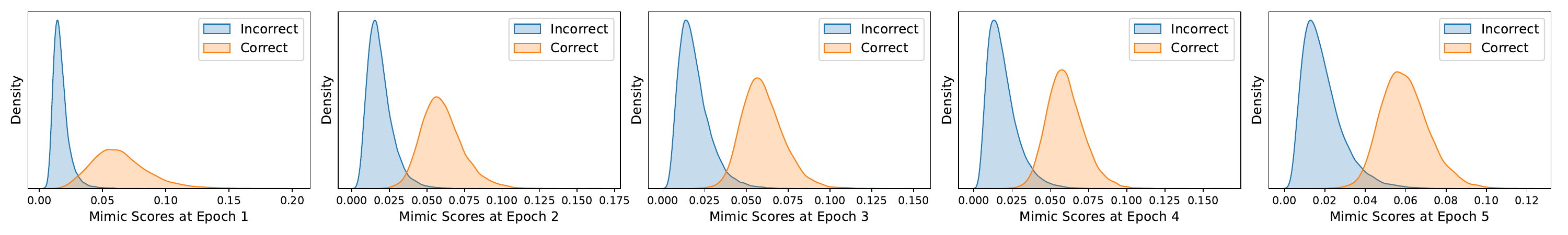}
    \end{center}
    \centering\caption{
    \textbf{Mimic Score Distribution}: Distributions extracted from CIFAR100 clearly separate by label correctness.
    }
    \label{fig:mimic score distribution in cifar100}
\end{figure*}

\begin{figure*}[t]
    \centering
    \includegraphics[width=0.315\textwidth]{figures/datacomp_learning_curve_medium.pdf}
    \includegraphics[width=0.32\textwidth]{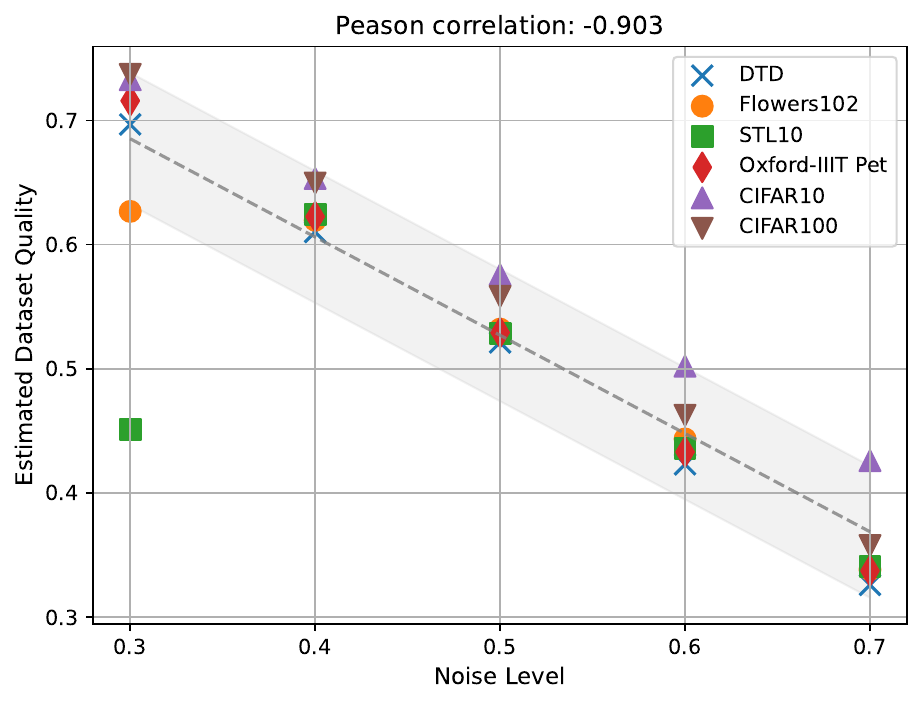}
    \includegraphics[width=0.34\textwidth]{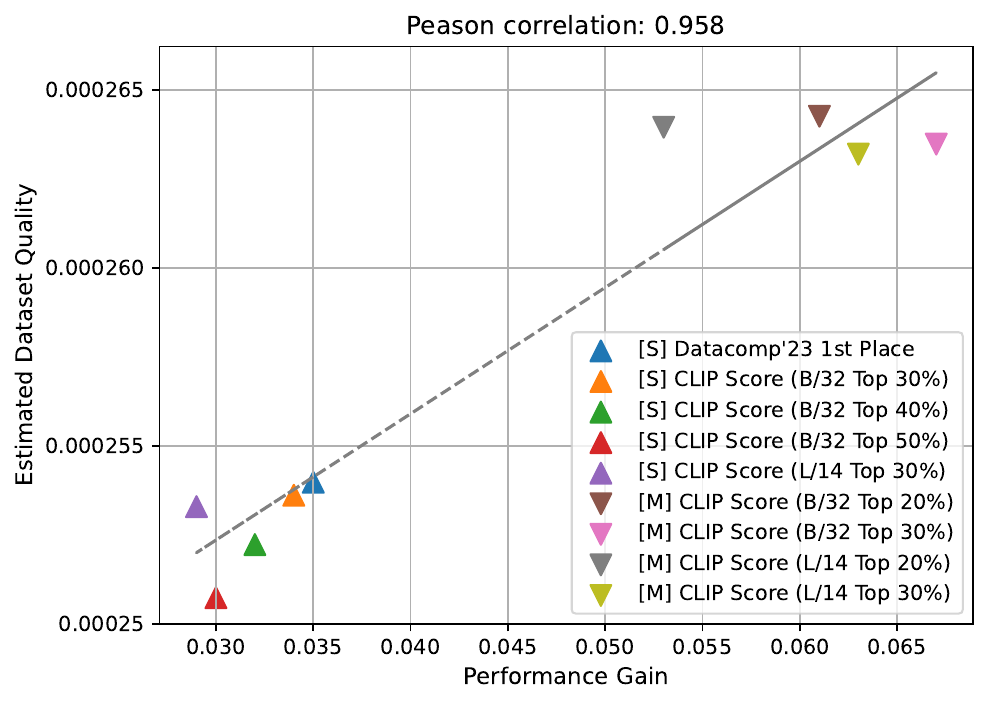}
    \caption{ 
    The left figure shows mimic score-guided training converges faster than vanilla baseline.
    The middle one uses retention rate to estimate training dataset quality under mislabeled sample experiments.
    The right figure takes the average on mimic scores in each curated dataset for performance gain estimation.
    [S/M] denotes the scale of DataComp dataset, small or medium, that the filter is designed on.
    }%
    \label{fig:dataset quality estimation}
\end{figure*}

\paragraph{Stage 2 Results.}
Next, we evaluate Grad-Mimic’s ability to detect mislabeled samples by comparing aggregated predictions with ground-truth label noise and report F1-scores in Table~\ref{tab:stage 2 results for noise detection}.
Grad-Mimic accurately identifies mislabeled samples across all datasets, \textbf{\emph{achieving F1-scores above 95\% and remaining robust to varying noise levels}}.
Additionally, Appendix~\ref{app:aggregation} reports results when comparing to \emph{majority vote} and using alternative aggregation methods, along with their runtime and data requirements.
Results show that \textbf{\emph{Stage 2 incurs negligible computational cost, and competitive performance can be achieved using only 0.5\% of the data}}.

Using CIFAR100 as an example, we visualize the distribution of mimic scores at each training step in Figure~\ref{fig:mimic score distribution in cifar100}.
In this dataset, half of the labels are flipped.
The scores clearly separate into two groups, confirming that mimic scores provide informative signals for sample identification.
Although the distributions vary over training, these dynamics are effectively captured by our aggregation step.

Finally, we use the ensemble filter from each dataset created by GMM clustering to estimate training dataset quality based on the proportion of retained samples.
The results, displayed in Figure~\ref{fig:dataset quality estimation} (middle), \textbf{\emph{highly correspond to the presence of label noise, with a Pearson correlation of 0.903}}.
These demonstrate mimic score's effectiveness for sample identification and dataset quality estimation, validating our claims---\textbf{C4 \& C5}.

\subsection{Data Curation in Large-scale Web Datasets}
\label{datacomp exp}

\textbf{Setups.}
We test Grad-Mimic in a more challenging setting using million-scale web-crawled datasets.
We use the small- and medium-scale DataComp datasets~\cite{gadre2023datacomp}, which contain approximately 10 million and 100 million image-caption pairs, respectively, to train CLIP model from scratch~\cite{radford2021learning}.
In these datasets, noise is \emph{naturally} presented in the data.
We follow the training setup from DataComp~\cite{cherti2023reproducible, gadre2023datacomp} and use publicly available pre-trained CLIP model's weights as \href{https://huggingface.co/laion/CLIP-ViT-B-32-DataComp.XL-s13B-b90K}{our reference}.
This reference model, trained on the DataComp-1B dataset (1.4 billion samples), represents the best-performing model accessible to us at the time of experimentation.
It serves as a proxy for the ideal reference point for scenarios \emph{where training such powerful models is infeasible}.
We use the final MLP layer in the text and image encoders respectively as our target.
We evaluate resulting model performance via DataComp benchmark, which includes 38 diverse image classification and retrieval tasks.
Appendix~\ref{app:experiment} provides complete implementation details.

In Stage 2, we use top-$k$\% approach to binarize scores and create the ensemble filter, subsequently training a new CLIP model on the curated dataset to evaluate constructed filter.

\textbf{Expected Results.}
We expect mimic scores help large-scale training focus on high-value samples and can serve as an instrumental metric for automating effective data curation.

\textbf{Baselines.} 
For the first stage, we compare Grad-Mimic to vanilla training, where each sample contributes in the weight update equally.
For Stage 2 comparison, we study our mimic score-based filter with the following methods: 
(i) \emph{No Filtering}: using the entire training pool,
(ii) \emph{Basic Filtering}~\cite{gadre2023datacomp}: selecting samples based on predefined criteria like caption length, image size, and caption language---representing a \emph{human-designed filter}, 
(iii) \emph{CLIP Score}~\cite{hessel2021clipscore}: selecting top-$k$\% samples based on embedding similarity between images and captions.
Scores are computed by OpenAI’s CLIP ViT-B/32 model~\cite{radford2021learning}.
%
%

\begin{table}[t]
\centering
\caption{\textbf{Stage 1 Results in DataComp Experiment}:
On both scales, with the aid of publicly available pre-trained weights, Grad-Mimic yields higher CLIP model performance.
Full table with various temperature settings is placed in Table~\ref{tab:datacomp stage 1 full table} in Appendix~\ref{app:more exp}.
}
\begin{tabular}{@{}lllcc@{}}
\toprule
Scale & Training Method & Mimic Layer $\theta_{\text{ref}}$ & Temperature $\tau$ &
\begin{tabular}[c]{@{}c@{}}Average over\\ 38 datasets ($\uparrow$)\end{tabular} \\
\midrule
 & Vanilla Training & --- & --- & 0.131 \\ \cmidrule(l){2-5}
 & & & 0.05 & \textbf{0.135} \\
 & & \multirow{-2}{*}{\begin{tabular}[c]{@{}l@{}}Last MLP Layer \\ in Text Encoder\end{tabular}} & 0.5 & \textbf{0.139} \\ \cmidrule(l){3-5}
 & & & 0.05 & \textbf{0.146} \\
\multirow{-5.5}{*}{Small} & \multirow{-4}{*}{\textbf{Grad-Mimic}} &
\multirow{-2}{*}{\begin{tabular}[c]{@{}l@{}}Last MLP Layer \\ in Image Encoder\end{tabular}} & 0.5 & \textbf{0.145} \\
\midrule
 & Vanilla Training & --- & --- & 0.254 \\ \cmidrule(l){2-5}
\multirow{-2.5}{*}{Medium} & \textbf{Grad-Mimic} &
\begin{tabular}[c]{@{}l@{}}Last MLP Layer \\ in Image Encoder\end{tabular} & 0.05 & \textbf{0.258} \\
\bottomrule
\end{tabular}%
\label{tab:datacomp stage 1}
\end{table}

\begin{table}[t]
\centering
\caption{\textbf{Stage 2 Results in DataComp Experiment}: 
Mimic score-based filters augment CLIP score-based filters by removing low-value samples. 
Symbol ``$\setminus$'' denotes the exclusion of curated datasets.
}
\begin{tabular}{@{}llcc@{}}
\toprule
Scale & Filtering Strategy & \begin{tabular}[c]{@{}c@{}}Training \\Dataset Size\end{tabular} & \begin{tabular}[c]{@{}c@{}}Average over\\ 38 datasets ($\uparrow$)\end{tabular} \\ \midrule
 & \cellcolor[HTML]{EFEFEF}No Filtering & \cellcolor[HTML]{EFEFEF}10.7M & \cellcolor[HTML]{EFEFEF}0.131 \\
 & CLIP Score (B/32 Top 30\%) & 3M & 0.165 \\
 & \cellcolor[HTML]{EFEFEF}CLIP Score $\setminus$ Mimic Score Bottom 15\% & \cellcolor[HTML]{EFEFEF}\textbf{2.7M (0.3M $\downarrow$)} & \cellcolor[HTML]{EFEFEF}\textbf{0.163} \\
\multirow{-4}{*}{Small} & CLIP Score $\setminus$ Mimic Score Bottom 20\% & \textbf{2.6M {\textcolor{purple}{(0.4M $\downarrow$)}}} & \textbf{0.164} \\ \midrule
 & \cellcolor[HTML]{EFEFEF}No Filtering & \cellcolor[HTML]{EFEFEF}101.9M & \cellcolor[HTML]{EFEFEF}0.254 \\
 & CLIP Score (B/32 Top 30\%) & 30M & 0.321 \\
 & \cellcolor[HTML]{EFEFEF}CLIP Score $\setminus$ Mimic Score Bottom 5\% & \cellcolor[HTML]{EFEFEF}\textbf{28.1M (1.9M $\downarrow$)} & \cellcolor[HTML]{EFEFEF}\textbf{0.323} \\
 & CLIP Score $\setminus$ Mimic Score Bottom 10\% & \textbf{27.7M (2.3M $\downarrow$)} & \textbf{0.323} \\
\multirow{-5}{*}{Medium} & \cellcolor[HTML]{EFEFEF}CLIP Score $\setminus$ Mimic Score Bottom 30\% & \cellcolor[HTML]{EFEFEF}\textbf{25.3M {\textcolor{purple}{(4.7M $\downarrow$)}}} & \cellcolor[HTML]{EFEFEF}\textbf{0.322} \\ \bottomrule
\end{tabular}%
\label{tab:datacomp stage 2}
\end{table}

\textbf{Stage 1 Results.}
Table~\ref{tab:datacomp stage 1} presents the pretraining results for CLIP models on both dataset scales.
Full results using various temperatures are provided in Appendix~\ref{app:more exp}.
Grad-Mimic consistently outperforms standard training across all temperature settings.
Interestingly, we find that mimicking the last layer of the image encoder yields more performance gains compared to targeting the text encoder.
Furthermore, Figure~\ref{fig:dataset quality estimation} (left) presents the learning curves for CLIP models trained with Grad-Mimic versus the vanilla method.
\textbf{\emph{Grad-Mimic achieves faster convergence while producing better CLIP models, using 20.7\% fewer steps}}.
These findings on the scale-up testbed further support our claims \textbf{C1 \& C2}.

\textbf{Stage 2 Results.}
We use the derived mimic scores from our best-performing model in Stage 1 to design filters and evaluate their effectiveness.
We study the broader applicability of mimic scores as a complement signal when combined with existing filters.
We remove low-value samples ranked by mimic score to augment a CLIP score-based filter (B/32 Top 30\%).
Results are presented in Table~\ref{tab:datacomp stage 2}.
This complementary approach not only enhances model performance but also improves data efficiency by reducing training set size, \emph{\textbf{specifically removing 4.7 million samples in the medium-scale dataset.}}
We further compare our developed filters against basic filtering strategies, confirming Grad-Mimic effectively automates selection process, outperforming human-designed one in Appendix~\ref{app:more exp}.

\begin{figure*}[t!]
    \centering
    {\includegraphics[width=0.54\textwidth]{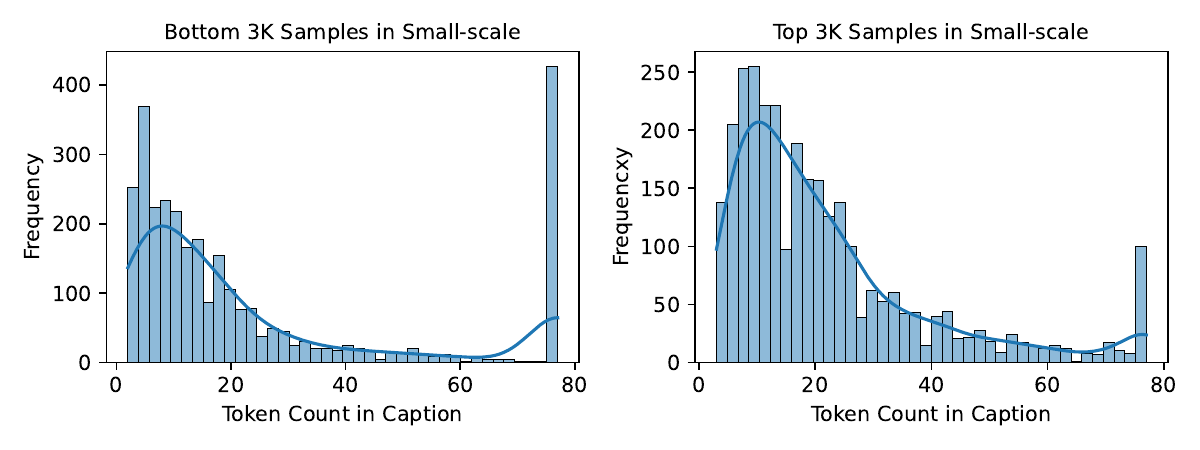}}
    {\includegraphics[width=0.21\textwidth]{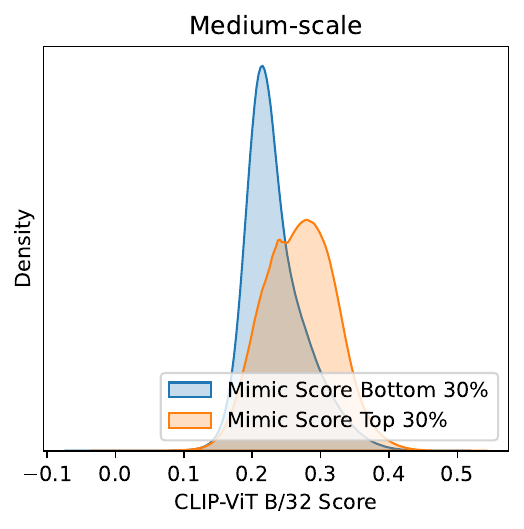}}
    {\includegraphics[width=0.22\textwidth]{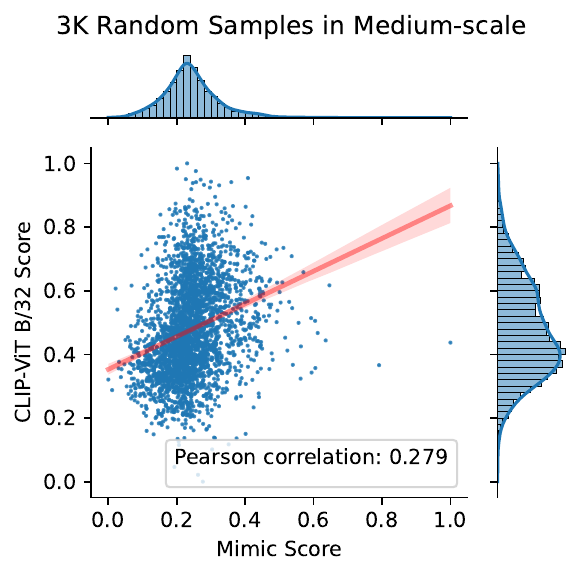}}
    \caption{
    The left two figures compare token counts using bottom and top samples ranked by mimic score.
    This filtering logic matches handcrafted heuristics, where low-value samples have captions that are either \emph{too short} or \emph{over the 77-token limit}.
    The right two figures show mimic score distribution and its correlation with CLIP score.
    }
    \label{fig:sample analysis}
\end{figure*}

Finally, we gather mimic scores from datasets that are curated by various CLIP score-based filters. 
We include the top-ranked approach during our time of experimentation~\cite{huang2024multimodal}, which uses an orthogonal approach based on object detection models.
We average samples' mimic scores in each curated dataset as dataset quality estimates and compare with their corresponding performance gains.
The results, shown in Figure~\ref{fig:dataset quality estimation} (right), reveal a positive alignment with a Pearson correlation of \textbf{0.958}, demonstrating the feasibility to \textbf{\emph{use mimic score as a reliable metric to predict model performance gains}}.
These again support our claims,~\textbf{C4 \& C5}.

\textbf{Sample Analysis.}
We analyze high- and low-value samples ranked by mimic score.
In Figure~\ref{fig:sample analysis}, we compare token count in the web-crawled caption.
We find that bottom-ranked samples often carry captions that are either too short (1-2 words) or excessively long (over the maximum token limit), unlike top-ranked samples.
Moreover, mimic score-based filter effectively identifies low-value samples, such as misaligned-caption images in the medium-scale dataset (see Figure~\ref{fig:top10_and_bottom10}).
\textbf{\emph{These filtering patterns align with human intuition but are automatically captured by Grad-Mimic.}}

We visualize the top and bottom 30\% of samples along with their CLIP scores in Figure~\ref{fig:sample analysis}.
High-value samples align with higher CLIP scores, while low-value ones are more concentrated in the lower range (half below 0.2).
Furthermore, we calculate their correlation on random samples.
We normalize both scores into the range 0 to 1 for clarity.
The positive correlation supports our claim \textbf{C4 \& C5}, mimic score is effective at identifying high-quality samples, as reflected by high CLIP scores.
\section{Conclusion}
We introduce Mimic Score, an efficient data-quality metric that assesses sample utility by exploiting the weight-space geometry from a pre-trained model and the alignment using sample gradient.
Leveraging this metric, we propose Grad-Mimic, a two-stage framework including online prioritization and offline selection.
Empirically, Grad-Mimic improves data efficiency, accelerates convergence, accurately detects mislabeled samples, and enhances existing filters, all while incurring minimal computational overheads.
\section*{Impact Statement}
\label{app:limitation}
Data selection is a crucial prerequisite for training modern models.
In this work, we demonstrate that Grad-Mimic effectively identifies and removes low-value samples, enhancing data efficiency and accelerating training.
We do not foresee any direct negative social consequences from the technique itself.

However, if the reference model used to evaluate sample utility is poorly calibrated or biased, there remains a risk of refining low-quality datasets.
Appendix~\ref{app:more exp} investigates such scenario.
We study the sensitivity to different reference models and the impact of reference weight quality.
Our results indicate that Grad-Mimic is remarkably robust: performance remains stable when targeting different regions of the weight-space (see Table \ref{tab:generalization}).
Additionally, performance remains competitive even when the reference model is heavily degraded by noise (see Figure~\ref{fig:noisy reference weights}), or trained on limited resources (using 372$\times$ smaller dataset in Table~\ref{tab:distilled_models}).

In practice, we believe this \emph{reference model dependency} can be easily mitigated by validating the reference model on a small, trusted benchmark prior to sample scoring.
This lightweight check also applies when choosing among multiple candidate reference models.
Compared to many recent data selection approaches that depend on high-quality validation sets, \emph{ensuring reasonable reference model quality is substantially easier}.
Overall, Grad-Mimic provides greater flexibility and control by decoupling data scoring from such \emph{hard-to-obtain} resources.
\section*{Acknowledgments}
We thank Kwei-Herng Lai, Prasad Gabbur, Meng Cao, Ryan Jia, Russ Webb, Vedant Joshi, Albert Ge, Harit Vishwakarma, Dyah Adila, Changho Shin, Satya Sai Srinath Namburi, Zhiqi Gao, Avi Trost, and Gabe Orlanski for their helpful feedback and valuable discussion.

\bibliography{reference}
\bibliographystyle{achemso}

\appendix
\clearpage
\textbf{Appendix Roadmap.}
Our appendix is structured as follows.
It starts with a theoretical analysis of effective batch re-weighting in Appendix~\ref{app:convergence}, studying conditions where Grad-Mimic's online re-weighting method outperforms standard gradient descent.
Next, Appendix~\ref{app:algorithm} outlines Grad-Mimic algorithms in both stages.
This leads into detailed experimental setups, including training configurations and computational resources, presented in Appendix~\ref{app:experiment}.
Then, we continue with Appendix~\ref{app:comparison}, providing further discussion when comparing against other model-based data selection approaches.
After this, Appendix~\ref{app:efficiency} provides a comprehensive analysis about Grad-Mimic's efficiency advantages, substantiated by various evidence.
We then incorporate additional aggregation alternatives from the weak supervision literature and study their resulting performance, needed runtime, and data requirement in Appendix~\ref{app:aggregation}.
We further present an interesting application of mimic score for discovering training samples used in reference models in Appendix~\ref{app:reverse engineering}.
Appendix~\ref{app:more exp} wraps up with extensive ablation studies.
We investigate temperature stability, extension to text-domain tasks, choices of layer weights, generalization to difference reference models, the impact of reference model quality, scenario when facing new domains, and comparison to human-designed filters.
Finally, we discuss potential directions in our future work in Appendix~\ref{app:future work}.
\section{Theoretical Analysis}
\label{app:convergence}

\newcommand{\Ex}{\mathbb{E}^{(x)}}
\newcommand{\Covx}{\text{Cov}^{(x)}}
\newcommand{\Varx}{\text{Var}^{(x)}}
\newcommand{\Sum}[1]{\sum_{#1=1}^n}

\newcommand{\para}{\theta}
\newcommand{\parat}{\theta_t}
\newcommand{\paragd}{\theta^{\text{gd}}_{t+1}}
\newcommand{\paragm}{\theta^{\text{gm}}_{t+1}}
\newcommand{\paraopt}{\theta_*}
\newcommand{\pararef}{\theta_{\text{ref}}}

\newcommand{\mim}{\overline{m}}

\newcommand{\loss}{\ell}
\newcommand{\data}{s}

\newcommand{\ndloss}{\tilde{g}}
\newcommand{\dloss}{g}

We first present a glossary table listing the defined notations and their meanings used in the following analysis.
\begin{table}[h]
    \centering
    \begin{tabular}{|c|c|}
        \hline
        Symbol & Meaning \\
        \hline 
        \hline
        $\mathbb{E}$ & Expectation over inherent randomness, such as over gradient noise $\delta$. \\
        $\Ex$ & Expectation over the dataset \\
        $\sigma(\cdot)$ & The softmax function \\
        \hline
        $n$ & Dataset size \\
        $L$ & Smoothness parameter on loss function $\ell$ \\
        $\delta$ & Level of the gradient noise, $\delta = \mathbb{E}[\|\delta_{i, t}\|^2]$ \\
        $\epsilon$ & Parameter distance between reference model and the optimal model, $\epsilon = \|\pararef - \paraopt\|$ \\
        $\tau$ & Temperature parameter used in Grad-Mimic updates \\
        \hline
        $v_t$ & Target vector induced by reference model, $\pararef - \parat$ \\
        $-a_{i, t}$ & The alignment of the gradient of the loss with the target vector, $\dloss_{i, t}^\top v_t$ \\
        $\lambda_t$ & The softmax parameter, $1/(\tau\|v_t\|)$ \\
        \hline
        $\loss(\para_t; \data_i)$ & The loss of the model with parameters $\para_t$ on a given sample $\data_i$ \\
        $\dloss_{i, t}$ & $\nabla \ell(\theta_t; s_i)$ \\
        $\ndloss_{i, t}$ & $\nabla \ell(\theta_t; s_i) + \delta_{i, t}$ where $\delta_{i, t}$ is a noise term \\
        $\paragd$ & The learned parameter through a standard GD update \\
        $\paragm$ & The learned parameter through a Grad-Mimic update \\
        \hline
    \end{tabular}
    \label{tab:notation}
\end{table}

We aim to analyze the convergence rates of standard Gradient Descent (GD) and the Grad-Mimic (GM) online re-weighting algorithm for an $L$-smooth loss function $\loss$, using noisy gradients with noise level $\delta$.
Our goal is to compare their convergence to the global minimum $\paraopt$, explicitly incorporating the term $\epsilon = \|\pararef - \paraopt\|$ for Grad-Mimic.
We start by specifying the update rules for both algorithms, then present three lemmas concerning adaptive algorithms and the softmax function. 
Finally, we discuss how these results apply to Grad-Mimic and quantify the improvement over GD.

\textbf{Problem Setup.}
Consider a (potentially noisy) dataset \(D = \{ \data_i \}_{i=1}^n\) and an \(L\)-smooth loss function \(\loss(\para; \data)\), defined as:
\[
\loss(\para) = \frac{1}{n} \sum_{i=1}^n \loss(\para; \data_i).
\]
The function \(\loss\) is \(L\)-smooth, and its global minimum is denoted by \(\paraopt\).
We assume access to noisy gradients:
\[
\ndloss_{i,t} = \dloss_{i,t} + \delta_{i,t},
\]
where the gradient noise \(\delta_{i,t}\) satisfies:
\[
\mathbb{E}[\delta_{i,t}] = 0, \quad \mathbb{E}[\|\delta_{i,t}\|^2] = \delta.
\]
Additionally, we have a reference solution \(\pararef\) closer to the optimum than the initial weight \(\para_0\) in the weight space:
\[
\|\para_0 - \paraopt\| \gg \|\pararef - \paraopt\| = \epsilon.
\]

\textbf{Standard Gradient Descent.}
An update rule for GD at iteration $t$ is:
\begin{equation}
    \paragd = \parat - \eta \frac{1}{n} \Sum{i} \ndloss_{i, t}.
\end{equation}

\textbf{Grad-Mimic's Online Re-weighting.}
An update rule for Grad-Mimic at iteration $t$ is:
\begin{equation}
    \paragm = \parat - \frac{\eta}{\Sum{i} \alpha_{i, t}} \Sum{i} \alpha_{i, t} \ndloss_{i, t},
\end{equation}
with adaptive weighting that:
\begin{align*}
    v_t &= \pararef - \para_t, \\
    m_{i, t} &= -\ndloss_{i, t}^\top v_t/\|v_t\|, \\
    \alpha_{i, t} &= e^{m_{i, t}/\tau}. 
\end{align*}

This adaptive weighting leverages the proximity of the reference point $\pararef$ to the true minimum $\paraopt$, which is expected to effectively mitigating gradient noise.
When it is convenient, the following notation is used:
\begin{align*}
    a_{i, t} &= -\ndloss_{i, t}^\top v_t, \\
    \lambda &= 1/(\tau\|v_t\|),
\end{align*}
so that
\begin{align*}
    \alpha_{i, t} &= \exp(\lambda a_{i, t}).
\end{align*}

We begin with a lemma about \textit{any adaptive weighting procedure}, where the $\alpha_{i, t}$ (written as $\alpha_i$ unless the timestep needs to be specified) terms are unconstrained.

\begin{lemma}
\label{lma:rdefinition}
    Let
    \[R = 2 \Covx(\frac{\alpha_i}{\Ex[\alpha_i]}, a_{i, t}) - \eta (\|\Ex[\frac{\alpha_i}{\Ex[\alpha_i]} \ndloss_{i, t}]\|^2 - \|\Ex[\ndloss_{i, t}]\|^2) \]
    Then
    \[\|\paragm - \pararef\|^2 = \|\paragd - \pararef\|^2 - \eta R\]
\end{lemma}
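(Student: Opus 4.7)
The plan is a direct algebraic manipulation: expand the squared distances to $\pararef$ for both updates, subtract, and identify the resulting cross term as the covariance expression appearing in $R$.

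First I would rewrite both update rules as expectations over the empirical distribution on the dataset. The standard GD step gives $\paragd - \pararef = -v_t - \eta \Ex[\ndloss_{i,t}]$, while the Grad-Mimic step gives $\paragm - \pararef = -v_t - \eta \Ex\bigl[\tfrac{\alpha_i}{\Ex[\alpha_i]}\, \ndloss_{i,t}\bigr]$, since $\tfrac{1}{\sum_i \alpha_i}\sum_i \alpha_i X_i = \Ex\bigl[\tfrac{\alpha_i}{\Ex[\alpha_i]} X_i\bigr]$ under the uniform empirical measure. The key normalization to remember is $\Ex\bigl[\tfrac{\alpha_i}{\Ex[\alpha_i]}\bigr] = 1$, which will be what collapses a product of expectations into a covariance.

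Next I would expand both squared norms via $\|u + w\|^2 = \|u\|^2 + 2 u^\top w + \|w\|^2$ with $u = -v_t$. The $\|v_t\|^2$ terms cancel upon subtraction, leaving
\begin{align*}
\|\paragm - \pararef\|^2 - \|\paragd - \pararef\|^2
&= 2\eta\, v_t^\top\!\left(\Ex[\ndloss_{i,t}] - \Ex\!\left[\tfrac{\alpha_i}{\Ex[\alpha_i]}\ndloss_{i,t}\right]\right) \\
&\quad + \eta^2\!\left(\left\|\Ex\!\left[\tfrac{\alpha_i}{\Ex[\alpha_i]}\ndloss_{i,t}\right]\right\|^2 - \|\Ex[\ndloss_{i,t}]\|^2\right).
\end{align*}
The $\eta^2$ term is already $-\eta \cdot \eta(\|\cdot\|^2 - \|\cdot\|^2)$ after sign, matching the second piece of $R$.

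For the linear-in-$\eta$ term, I would use the definition $a_{i,t} = -\ndloss_{i,t}^\top v_t$ to pull $v_t$ inside the expectations, turning it into $2\eta\bigl(\Ex[\tfrac{\alpha_i}{\Ex[\alpha_i]} a_{i,t}] - \Ex[a_{i,t}]\bigr)$. Then using $\Ex[\tfrac{\alpha_i}{\Ex[\alpha_i]}] = 1$, the second term equals $\Ex[\tfrac{\alpha_i}{\Ex[\alpha_i]}]\Ex[a_{i,t}]$, so the difference is exactly $\Covx\bigl(\tfrac{\alpha_i}{\Ex[\alpha_i]}, a_{i,t}\bigr)$. Combining both pieces and factoring out $-\eta$ yields $-\eta R$, completing the identity.

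The main obstacle is purely bookkeeping: keeping the signs straight (the minus in $a_{i,t} = -\ndloss_{i,t}^\top v_t$ cancels the minus from $-v_t$, and the overall subtraction order produces the $-\eta R$ rather than $+\eta R$), and correctly invoking $\Ex[\alpha_i/\Ex[\alpha_i]] = 1$ at the step where an ordinary difference of expectations is rewritten as a covariance. Since the claim is an equality (not a bound), no smoothness, convexity, or noise assumptions from the setup are actually needed for this lemma; those will only appear when one later tries to show $R \geq 0$ or lower-bound $R$.
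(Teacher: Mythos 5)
Your overall route is the same as the paper's: expand the squared distances to $\theta_{\text{ref}}$ algebraically and let the covariance appear via $\mathbb{E}^{(x)}\!\left[\alpha_i/\mathbb{E}^{(x)}[\alpha_i]\right]=1$ (the paper just expands $\|\theta^{\text{gm}}_{t+1}-\theta_{\text{ref}}\|^2$ alone and adds-and-subtracts, rather than subtracting two expansions). However, your displayed intermediate equation has the cross term in the wrong orientation. Since $\theta^{\text{gm}}_{t+1}-\theta_{\text{ref}}=-v_t-\eta\,\mathbb{E}^{(x)}\!\left[\tfrac{\alpha_i}{\mathbb{E}^{(x)}[\alpha_i]}\tilde{g}_{i,t}\right]$ and likewise for GD, subtracting the two expansions gives a linear term $2\eta\,v_t^\top\!\left(\mathbb{E}^{(x)}\!\left[\tfrac{\alpha_i}{\mathbb{E}^{(x)}[\alpha_i]}\tilde{g}_{i,t}\right]-\mathbb{E}^{(x)}[\tilde{g}_{i,t}]\right)$ --- the reverse order of what you wrote. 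With $a_{i,t}=-\tilde{g}_{i,t}^\top v_t$ this equals $-2\eta\,\mathrm{Cov}^{(x)}\!\left(\alpha_i/\mathbb{E}^{(x)}[\alpha_i],\,a_{i,t}\right)$, and it is precisely that negative sign that makes the total collapse to $-\eta R$. Followed literally, your chain yields $+2\eta\,\mathrm{Cov}^{(x)}(\cdot,\cdot)$ and therefore does not equal $-\eta R$ (it flips the sign of the covariance piece of $R$). This is exactly the sign bookkeeping you flagged as the main obstacle; once the order of the two expectations in the cross term is corrected, the rest of your argument, including using $\mathbb{E}^{(x)}[\alpha_i/\mathbb{E}^{(x)}[\alpha_i]]=1$ to convert the difference of expectations into a covariance, matches the paper's proof step for step.
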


\begin{proof}
First, begin with a calculation of what $\|\paragm - \pararef\|^2$ is.
\begin{align*}
    \|\paragm - \pararef\|^2 &= \|\parat - \eta\frac{\Sum{i} \alpha_i \ndloss_{i, t}}{\Sum{i} \alpha_i} - \pararef\|^2 \\
    &= \|\parat - \eta\frac{\Ex[ \alpha_i \ndloss_{i, t}]}{\Ex [\alpha_i]} - \pararef\|^2 \\
    &= \|\parat - \pararef\|^2 - 2\eta(\parat - \pararef)^\top \frac{\Ex [\alpha_i \ndloss_{i, t}]}{\Ex[ \alpha_i]} + \eta^2 \left\|\frac{\Ex[ \alpha_i \ndloss_{i, t}]}{\Ex[ \alpha_i]}\right\|^2 \\
    &= \|\parat - \pararef\|^2 + 2\eta \frac{\Ex [\alpha_i v_t^\top\ndloss_{i, t}]}{\Ex [\alpha_i]} + \eta^2 \frac{\|\Ex [\alpha_i \ndloss_{i, t}]\|^2}{\Ex[ \alpha_i]^2} \\
\end{align*}
Next, using the following identities,
\begin{align*}
    \frac{\Ex [\alpha_i v_t^\top\ndloss_{i, t}]}{\Ex [\alpha_i]} &= \frac{\Ex [\alpha_i v_t^\top\ndloss_{i, t}] - \Ex[\alpha_i]\Ex[v_t^\top \ndloss_{i, t}] + \Ex[\alpha_i]\Ex[v_t^\top \ndloss_{i, t}]}{\Ex [\alpha_i]} \\
    &= \Ex[v_t^\top \ndloss_{i, t}] + \frac{\Ex [\alpha_i v_t^\top\ndloss_{i, t}] - \Ex[\alpha_i]\Ex[v_t^\top \ndloss_{i, t}]}{\Ex [\alpha_i]} \\
    &= \Ex[v_t^\top \ndloss_{i, t}] + \frac{\Covx(\alpha_i, v_t^\top \ndloss_{i, t})}{\Ex [\alpha_i]} \\
\end{align*}
and
\begin{align*}
    \frac{\|\Ex [\alpha_i \ndloss_{i, t}]\|^2}{\Ex[ \alpha_i]^2} &= \frac{\|\Ex [\alpha_i \ndloss_{i, t}]\|^2 - \Ex[\alpha_i]^2\|\Ex[\ndloss_{i, t}]\|^2 + \Ex[\alpha_i]^2\|\Ex[\ndloss_{i, t}]\|^2}{\Ex[ \alpha_i]^2} \\
    &= \|\Ex[\ndloss_{i, t}]\|^2 + \frac{\|\Ex [\alpha_i \ndloss_{i, t}]\|^2 - \Ex[\alpha_i]^2\|\Ex[\ndloss_{i, t}]\|^2}{\Ex[\alpha_i]^2} \\
    &= \|\Ex[\ndloss_{i, t}]\|^2 + \frac{\|\Ex [\alpha_i \ndloss_{i, t}]\|^2 - \|\Ex[\alpha_i]\Ex[\ndloss_{i, t}]\|^2}{\Ex[\alpha_i]^2} \\
\end{align*}
We can proceed as
\begin{align*}
     \|\paragm - \pararef\|^2 &= \|\parat - \pararef\|^2 + 2\eta \left(\Ex[v_t^\top \ndloss_{i, t}] + \frac{\Covx(\alpha_i, v_t^\top \ndloss_{i, t})}{\Ex[\alpha_i]}\right) \\
    &+ \eta^2 \left(  \|\Ex[\ndloss_{i, t}]\|^2 + \frac{\|\Ex[\alpha_i \ndloss_{i, t}]\|^2 - \|\Ex[\alpha_i]\Ex[\ndloss_{i, t}]\|^2}{\Ex[\alpha_i]^2}\right) \\
    &= \|\parat - \pararef\|^2 + 2\eta \Ex[v_t^\top \ndloss_{i, t}] + \eta^2 \|\Ex[\ndloss_{i, t}]\|^2 \\
    &+ 2\eta \frac{\Covx(\alpha_i, v_t^\top \ndloss_{i, t})}{\Ex[\alpha_i]} + \eta^2 \frac{\|\Ex[\alpha_i \ndloss_{i, t}]\|^2 - \|\Ex[\alpha_i]\Ex[\ndloss_{i, t}]\|^2}{\Ex[\alpha_i]^2} \\
    &= \|\paragd - \pararef\|^2 + 2\eta \frac{\Covx(\alpha_i, v_t^\top \ndloss_{i, t})}{\Ex[\alpha_i]} + \eta^2 \frac{\|\Ex[\alpha_i \ndloss_{i, t}]\|^2 - \|\Ex[\alpha_i]\Ex[\ndloss_{i, t}]\|^2}{\Ex[\alpha_i]^2} \\
    &= \|\paragd - \pararef\|^2 + 2\eta \Covx(\frac{\alpha_i}{\Ex[\alpha_i]}, v_t^\top \ndloss_{i, t}) + \eta^2 (\|\Ex[\frac{\alpha_i}{\Ex[\alpha_i]} \ndloss_{i, t}]\|^2 - \|\Ex[\ndloss_{i, t}]\|^2) \\
    &= \|\paragd - \pararef\|^2 - 2\eta \Covx(\frac{\alpha_i}{\Ex[\alpha_i]}, a_{i, t}) + \eta^2 (\|\Ex[\frac{\alpha_i}{\Ex[\alpha_i]} \ndloss_{i, t}]\|^2 - \|\Ex[\ndloss_{i, t}]\|^2) \\
    &= \|\paragd - \pararef\|^2 - \eta R
\end{align*}
\end{proof}

From Lemma~\ref{lma:rdefinition}, we want to bound $R$ to hopefully show that $R > 0$, indicating that taking an update step with Grad-Mimic will converge faster than GD.

\begin{lemma}
\label{lma:softmaxbound}
    Let $f(\lambda) = \frac{\Sum{i} \exp(\lambda v_i) v_i}{\Sum{i} \exp(\lambda v_i)}$ and $\sigma(\lambda)$ be the vector where $\sigma(\lambda)_i = \frac{\exp(\lambda v_i)}{\Sum{j} \exp(\lambda v_j)}$. Then
    \[f(\lambda) - f(0) \geq \frac{1}{\lambda}\|\sigma(\lambda) - \sigma(0)\|^2\]
\end{lemma}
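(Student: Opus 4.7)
The strategy is to reduce the bound to an elementary inequality about the log function applied to probabilities. First I would rewrite both sides in terms of the softmax output $\sigma(\lambda)$ alone. From the definition of the softmax, $\log \sigma(\lambda)_i = \lambda v_i - \log\bigl(\sum_j e^{\lambda v_j}\bigr)$, which can be rearranged to
\begin{equation*}
\lambda v_i = \bigl(\log \sigma(\lambda)_i - \log \sigma(0)_i\bigr) + C(\lambda),
\end{equation*}
where $C(\lambda) = \log\bigl(\sum_j e^{\lambda v_j}\bigr) - \log n$ is independent of $i$. Since $f(\lambda) = \sum_i \sigma(\lambda)_i v_i$ and $\sum_i \bigl(\sigma(\lambda)_i - \sigma(0)_i\bigr) = 0$, the constant $C(\lambda)$ drops out when I multiply by $\lambda$ and subtract $\lambda f(0)$, yielding the key identity
\begin{equation*}
\lambda \bigl(f(\lambda) - f(0)\bigr) = \sum_{i=1}^n \bigl(\sigma(\lambda)_i - \sigma(0)_i\bigr)\bigl(\log \sigma(\lambda)_i - \log \sigma(0)_i\bigr).
\end{equation*}

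Next, I would invoke an elementary inequality: for any $a, b \in (0, 1]$,
\begin{equation*}
(a - b)\bigl(\log a - \log b\bigr) \geq (a - b)^2.
\end{equation*}
This follows from the mean value theorem applied to $\log$: there exists $\xi$ between $a$ and $b$ with $\log a - \log b = (a-b)/\xi$, and since $\xi \leq \max(a,b) \leq 1$ we have $1/\xi \geq 1$. Applying this termwise with $a = \sigma(\lambda)_i \in (0,1]$ and $b = \sigma(0)_i = 1/n \in (0,1]$ and summing over $i$ gives
\begin{equation*}
\sum_{i=1}^n \bigl(\sigma(\lambda)_i - \sigma(0)_i\bigr)\bigl(\log \sigma(\lambda)_i - \log \sigma(0)_i\bigr) \geq \sum_{i=1}^n \bigl(\sigma(\lambda)_i - \sigma(0)_i\bigr)^2 = \|\sigma(\lambda) - \sigma(0)\|^2.
\end{equation*}
Combining the identity with the inequality and dividing by $\lambda > 0$ (the regime of interest, since $\lambda = 1/(\tau\|v_t\|) > 0$ in the Grad-Mimic setting) yields the claim.

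\textbf{Main obstacle.} The calculation itself is short; the only subtlety is ensuring that the domain condition $a, b \in (0, 1]$ for the elementary inequality is actually satisfied, which it is because $\sigma(\lambda)$ and $\sigma(0)$ are probability vectors. A secondary point to watch is the sign of $\lambda$: the statement implicitly presumes $\lambda > 0$ so that dividing by $\lambda$ preserves the inequality direction, and this matches the usage in the Grad-Mimic update rule. No additional assumptions on the $v_i$ (such as boundedness or sign) are needed, which makes the bound clean and widely applicable in the subsequent convergence analysis.
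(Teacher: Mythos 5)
Your proof is correct, but it takes a genuinely different route from the paper's. The paper writes $f(\lambda)-f(0)=(\sigma(\lambda)-\sigma(0))^\top v$ and then invokes the fact that the softmax map $v\mapsto\sigma(v)$ (at fixed inverse temperature $\lambda$) is $\lambda$-Lipschitz to conclude $(\sigma(v)-\sigma(0))^\top(v-0)\ge\frac{1}{\lambda}\|\sigma(v)-\sigma(0)\|^2$; this is really the co-coercivity inequality for gradients of smooth convex functions (softmax is the gradient of the convex log-sum-exp potential), and as stated in the paper the step is terse --- Lipschitz continuity alone does not yield co-coercivity without the additional fact that $\sigma$ is a convex gradient (Baillon--Haddad). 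Your argument instead rewrites $\lambda(f(\lambda)-f(0))$ as the symmetrized KL divergence $\sum_i(\sigma(\lambda)_i-\sigma(0)_i)(\log\sigma(\lambda)_i-\log\sigma(0)_i)$ --- the normalizing constant cancels because $\sigma(\lambda)-\sigma(0)$ sums to zero --- and then applies the elementary pointwise bound $(a-b)(\log a-\log b)\ge(a-b)^2$ for $a,b\in(0,1]$, proved by the mean value theorem. Both routes land on the same constant $1/\lambda$; yours is fully self-contained and elementary, exploits the specific structure of softmax outputs as probability vectors rather than general smoothness, and explicitly records the hypothesis $\lambda>0$ that the paper leaves implicit (and which does hold in the Grad-Mimic setting since $\lambda=1/(\tau\|v_t\|)$). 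The paper's route is shorter if one is willing to cite co-coercivity, and generalizes to any convex-gradient reweighting map, but your version is arguably the cleaner proof of the lemma as stated.
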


\begin{proof}
    Following the definitions,
    \begin{align*}
        f(\lambda) - f(0) &= \sigma(\lambda)^\top v - \sigma(0)^\top v \\
        &= (\sigma(\lambda) - \sigma(0))^\top (v - 0)
    \end{align*}
    Now, fix a $\lambda$. Abusing notation, let $\sigma(v)_i = \frac{\exp(\lambda v_i)}{\Sum{j} \exp(\lambda v_j)}$, where $\lambda$ is now fixed and the vector $v$ is variable. As such, $f(\lambda) - f(0) = (\sigma(v) - \sigma(0))^\top (v - 0)$. Since $\sigma(\cdot)$ is $\lambda$-Lipschitz, it follows that
    \[(\sigma(v) - \sigma(0))^\top (v - 0) \geq \frac{1}{\lambda} \|\sigma(v) - \sigma(0)\|^2\]
    then we can have
    \[f(\lambda) - f(0) \geq \frac{1}{\lambda}\|\sigma(\lambda) - \sigma(0)\|^2\]
\end{proof}

\begin{lemma}
\label{lma:rpositive}
    The value of $R$ in Grad-Mimic is guaranteed to be positive if
    \[\eta < \frac{2\tau \|v_t\|\cdot\|\sigma(1/\tau\|v_t\|) - \sigma(0)\|^2}{\|\Ex[\frac{\alpha_i}{\Ex[\alpha_i]} \ndloss(\para_t; \data_i)]\|^2 - \|\Ex[\ndloss(\para_t; \data_i)]\|^2},\]
    or $\|\Ex[\frac{\alpha_i}{\Ex[\alpha_i]} \ndloss(\para_t; \data_i)]\|^2 
    \leq\|\Ex[\ndloss(\para_t; \data_i)]\|^2$.
\end{lemma}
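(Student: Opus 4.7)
The plan is to start from the expression for $R$ given in Lemma~\ref{lma:rdefinition}, namely
\[
R = 2\,\Covx\!\left(\tfrac{\alpha_i}{\Ex[\alpha_i]},\,a_{i,t}\right) - \eta\left(\|\Ex[\tfrac{\alpha_i}{\Ex[\alpha_i]}\,\ndloss_{i,t}]\|^2 - \|\Ex[\ndloss_{i,t}]\|^2\right),
\]
and lower-bound the covariance term using Lemma~\ref{lma:softmaxbound}. The second case of the statement is essentially immediate: if the bracketed norm-difference is nonpositive, then the $-\eta(\cdot)$ contribution is already nonnegative, and the covariance between $\alpha_i=\exp(\lambda a_{i,t})$ and $a_{i,t}$ is nonnegative because $\alpha_i$ is a monotone increasing function of $a_{i,t}$ (so $R\ge 0$).

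For the first case, the key observation is that the covariance term can be rewritten exactly in the form that Lemma~\ref{lma:softmaxbound} controls. Expanding the definition,
\[
\Covx\!\left(\tfrac{\alpha_i}{\Ex[\alpha_i]},\,a_{i,t}\right) = \frac{\Ex[\alpha_i\,a_{i,t}]}{\Ex[\alpha_i]} - \Ex[a_{i,t}] = \frac{\sum_i \exp(\lambda a_{i,t})\,a_{i,t}}{\sum_i \exp(\lambda a_{i,t})} - \frac{1}{n}\sum_i a_{i,t},
\]
which is precisely $f(\lambda)-f(0)$ for $f$ as defined in Lemma~\ref{lma:softmaxbound}, with the identification $v_i \leftrightarrow a_{i,t}$ and $\lambda = 1/(\tau\|v_t\|)$. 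Applying that lemma gives
\[
2\,\Covx\!\left(\tfrac{\alpha_i}{\Ex[\alpha_i]},\,a_{i,t}\right) \;\ge\; \frac{2}{\lambda}\,\|\sigma(\lambda) - \sigma(0)\|^2 \;=\; 2\tau\|v_t\|\,\|\sigma(1/(\tau\|v_t\|)) - \sigma(0)\|^2.
\]

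Combining these yields
\[
R \;\ge\; 2\tau\|v_t\|\,\|\sigma(1/(\tau\|v_t\|)) - \sigma(0)\|^2 \;-\; \eta\!\left(\|\Ex[\tfrac{\alpha_i}{\Ex[\alpha_i]}\,\ndloss_{i,t}]\|^2 - \|\Ex[\ndloss_{i,t}]\|^2\right),
\]
and the hypothesis on $\eta$ is exactly the statement that the right-hand side is positive, which finishes the proof. The only nontrivial step is recognizing that the covariance coincides with the quantity $f(\lambda)-f(0)$ from Lemma~\ref{lma:softmaxbound}; once this identification is made, the rest is algebraic rearrangement. I expect the main subtle point to be verifying the sign in the covariance rewrite (since $a_{i,t}$ carries a minus sign relative to the gradient projection) and checking that $\alpha_i$ remains monotone in the quantity paired with it in the covariance so that Lemma~\ref{lma:softmaxbound} applies without a sign flip.
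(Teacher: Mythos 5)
Your proposal is correct and follows essentially the same route as the paper: identify the covariance term in $R$ with $f(\lambda)-f(0)$ from Lemma~\ref{lma:softmaxbound} (via $v_i \leftrightarrow a_{i,t}$ and $\lambda = 1/(\tau\|v_t\|)$), lower-bound it by $\frac{1}{\lambda}\|\sigma(\lambda)-\sigma(0)\|^2$, and read off the sufficient condition on $\eta$. Your explicit treatment of the second disjunct (nonpositive norm difference plus nonnegativity of the covariance by monotonicity of $\alpha_i$ in $a_{i,t}$) is a small completeness improvement over the paper, which leaves that case implicit.
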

\begin{proof}
    First, recall
    \[R = 2 \Covx(\frac{\alpha_i}{\Ex[\alpha_i]}, (\para_t-\pararef)^\top \ndloss(\para_t; \data_i)) - \eta (\|\Ex[\frac{\alpha_i}{\Ex[\alpha_i]} \ndloss(\para_t; \data_i)]\|^2 - \|\Ex[\ndloss(\para_t; \data_i)]\|^2)\]
    Inspecting the first term, as a result of the previous lemma, 
    \begin{align*}
        \Covx(\frac{\alpha_i}{\Ex[\alpha_i]}, (\para_t - \pararef)^\top\ndloss(\para_t; \data_i)) &= \Covx(\frac{\alpha_i}{\Ex[\alpha_i]}, \tilde a_{i, t})\\
        &= \Ex[\frac{\alpha_i}{\Ex[\alpha_i]} \tilde a_{i, t}] - \Ex[\tilde a_{i, t}] \\
        &= \frac{\Sum{i} \alpha_i \tilde a_{i, t}}{\Sum{i} \alpha_i} - \frac{\Sum{i} \exp(0)\tilde a_{i, t}}{\Sum{i} \exp(0)} \\
        &= \frac{\Sum{i} \exp(\lambda \tilde a_{i, t}) \tilde a_{i, t}}{\Sum{i} \exp(\lambda \tilde a_{i, t})} - \frac{\Sum{i} \exp(0\tilde a_{i, t})\tilde a_{i, t}}{\Sum{i} \exp(0\tilde a_{i, t})} \\
        &= f(\lambda) - f(0) \\
        &\geq \frac{1}{\lambda}\|\sigma(\lambda) - \sigma(0)\|^2
    \end{align*}
    Thus,
    \[R \geq \frac{2}{\lambda}\|\sigma(\lambda) - \sigma(0)\|^2 - \eta (\|\Ex[\frac{\alpha_i}{\Ex[\alpha_i]} \ndloss(\para_t; \data_i)]\|^2 - \|\Ex[\ndloss(\para_t; \data_i)]\|^2)\]
    Finally, with the definition $\lambda = 1/(\tau\|v_t\|)$,
    \[R \geq 2\tau \|v_t\|\cdot\|\sigma(1/\tau\|v_t\|) - \sigma(0)\|^2 - \eta (\|\Ex[\frac{\alpha_i}{\Ex[\alpha_i]} \ndloss(\para_t; \data_i)]\|^2 - \|\Ex[\ndloss(\para_t; \data_i)]\|^2)\]
    A sufficient condition for $R > 0$ is therefore given by
    \[\eta < \frac{2\tau \|v_t\|\cdot\|\sigma(1/\tau\|v_t\|) - \sigma(0)\|^2}{\|\Ex[\frac{\alpha_i}{\Ex[\alpha_i]} \ndloss(\para_t; \data_i)]\|^2 - \|\Ex[\ndloss(\para_t; \data_i)]\|^2}\]
\end{proof}

So far, the only quantity being bounded is the distance to the reference model $\pararef$. 
Lastly, we extend to bound convergence to the optimum $\paraopt$ and present our Theorem~\ref{thm:final} by using 3 lemma above.

\begin{theorem}
    \label{thm:final}
    Let $\kappa = \max_{i} \|\ndloss(\para_t; \data_i)\|$. Then
    \[\|\paragm - \paraopt\|^2 \leq \|\paragd-\paraopt\|^2 - \eta\left[2\tau \|v_t\|\cdot\|\sigma(1/\tau\|v_t\|) - \sigma(0)\|^2 - 4\epsilon \kappa - \eta \kappa^2\right] \]
    Specifically, if $\eta < \frac{2\tau\|v_t\| \cdot \|\sigma(1/\tau\|v_t\|) - \sigma(0)\|^2 - 4\epsilon \kappa}{\kappa^2}$ then $\|\paragm - \paraopt\| \leq \|\paragd - \paraopt\|$.
\end{theorem}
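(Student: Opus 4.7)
The plan is to reduce Theorem~\ref{thm:final} to what has already been established about the distance to the reference point and then convert to the distance to the optimum by paying the price $\epsilon = \|\pararef - \paraopt\|$. The starting point will be the identity
\begin{equation*}
\|\paragm - \paraopt\|^2 - \|\paragd - \paraopt\|^2 = \|\paragm - \pararef\|^2 - \|\paragd - \pararef\|^2 + 2(\paragm - \paragd)^\top(\pararef - \paraopt),
\end{equation*}
which follows from expanding both squared norms around $\pararef$; the quadratic term $\|\pararef - \paraopt\|^2$ cancels and only the cross term depending on $\paragm - \paragd$ remains.

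Next, I would bound the two pieces on the right separately. For the first difference, Lemma~\ref{lma:rdefinition} combined with the lower bound on $R$ obtained in the proof of Lemma~\ref{lma:rpositive} gives
\begin{equation*}
\|\paragm - \pararef\|^2 - \|\paragd - \pararef\|^2 \leq -\eta\bigl[2\tau\|v_t\|\cdot\|\sigma(1/\tau\|v_t\|) - \sigma(0)\|^2\bigr] + \eta^2\bigl(\|\Ex[\tfrac{\alpha_i}{\Ex[\alpha_i]}\ndloss_{i,t}]\|^2 - \|\Ex[\ndloss_{i,t}]\|^2\bigr),
\end{equation*}
and the bracketed ``variance''-like term is controlled by $\kappa^2$, since $\alpha_i/\Ex[\alpha_i]$ is a nonnegative weighting with unit mean, so the weighted average of $\ndloss_{i,t}$ has norm at most $\max_i\|\ndloss_{i,t}\| = \kappa$. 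For the cross term, Cauchy--Schwarz gives $|(\paragm - \paragd)^\top(\pararef - \paraopt)| \leq \epsilon\,\|\paragm - \paragd\|$, and from the two update rules
\begin{equation*}
\paragm - \paragd = \eta\Bigl(\Ex[\ndloss_{i,t}] - \Ex\!\bigl[\tfrac{\alpha_i}{\Ex[\alpha_i]}\ndloss_{i,t}\bigr]\Bigr),
\end{equation*}
whose norm is at most $2\eta\kappa$ by the same weighted-average argument plus the triangle inequality. So the cross term contributes at most $4\eta\kappa\epsilon$.

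Assembling these pieces yields exactly the displayed inequality, and the sufficient condition on $\eta$ follows by requiring the bracketed quantity $2\tau\|v_t\|\cdot\|\sigma(1/\tau\|v_t\|) - \sigma(0)\|^2 - 4\epsilon\kappa - \eta\kappa^2$ to be nonnegative and solving for $\eta$. The main conceptual obstacle, already handled by the earlier lemmas, is the positivity of $R$; the remaining work here is purely bookkeeping, and the only subtlety is making sure the ``variance'' term coming from the weighted gradient is indeed dominated by $\kappa^2$ rather than something larger, which is where the uniform gradient bound $\kappa = \max_i\|\ndloss(\theta_t; s_i)\|$ is essential. Note also that $L$-smoothness is not explicitly used in this final step; it enters implicitly through the setup ensuring that the gradient-based updates make sense.
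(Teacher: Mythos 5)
Your proposal is correct and follows essentially the same route as the paper's proof: expand both squared distances around $\pararef$ so the $\|\pararef-\paraopt\|^2$ terms cancel, bound $\|\paragm - \pararef\|^2 - \|\paragd - \pararef\|^2 = -\eta R$ via Lemma~\ref{lma:rdefinition} and the lower bound on $R$ from Lemma~\ref{lma:rpositive}, control the cross term by Cauchy--Schwarz together with $\|\paragm - \paragd\| \leq 2\eta\kappa$, and dominate the weighted-gradient term by $\kappa^2$. If anything, your justification of $\|\paragm - \paragd\| \leq 2\eta\kappa$ (as the difference of two update steps, each a convex combination of gradients of norm at most $\kappa$) is stated more carefully than the paper's shorthand $\|\paragm\| + \|\paragd\| \leq 2\eta\kappa$.
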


\begin{proof}
    Starting with an application of Lemma \ref{lma:rdefinition},
    \begin{align*}
        \|\paragm - \paraopt\|^2 &= \|\paragm - \pararef\|^2 + \|\pararef - \paraopt\|^2 + 2 (\paragm-\pararef)^\top (\pararef - \paraopt) \\
        &= \|\paragd - \pararef\|^2 + \|\pararef - \paraopt\|^2 + 2 (\paragm-\pararef)^\top (\pararef - \paraopt) - \eta R \\
        &= \|\paragd - \paraopt\|^2 - 2 (\paragd-\pararef)^\top (\pararef - \paraopt) + 2 (\paragm-\pararef)^\top (\pararef - \paraopt) - \eta R \\
        &= \|\paragd - \paraopt\|^2 + 2 (\paragm-\paragd)^\top (\pararef - \paraopt) - \eta R \\
        &\leq \|\paragd - \paraopt\|^2 + 2\epsilon \|\paragd - \paragm\| - \eta R 
    \end{align*}
    Thus, we achieve superior performance towards the optimum if
    \[R > \frac{2 \epsilon\|\paragm-\paragd\|}{\eta}\]
    Using Lemma \ref{lma:rpositive}, 
    \begin{align*}
        \|\paragm - \paraopt\|^2 &\leq \|\paragd - \paraopt\|^2 + 2\epsilon \|\paragd - \paragm\| - \eta R \\
        &\leq \|\paragd - \paraopt\|^2 + 2\epsilon \|\paragd - \paragm\| - 2\eta\tau \|v_t\|\cdot\|\sigma(1/\tau\|v_t\|) - \sigma(0)\|^2 \\
        &+ \eta^2 (\|\Ex[\frac{\alpha_i}{\Ex[\alpha_i]} \ndloss(\para_t; \data_i)]\|^2 - \|\Ex[\ndloss(\para_t; \data_i)]\|^2) \\
        &\leq \|\paragd - \paraopt\|^2 + 4\epsilon \eta \kappa - 2\eta\tau \|v_t\|\cdot\|\sigma(1/\tau\|v_t\|) - \sigma(0)\|^2 \\
        &+ \eta^2 (\|\Ex[\frac{\alpha_i}{\Ex[\alpha_i]} \ndloss(\para_t; \data_i)]\|^2 - \|\Ex[\ndloss(\para_t; \data_i)]\|^2) \\
        &\leq \|\paragd - \paraopt\|^2 + 4\epsilon \eta \kappa - 2\eta\tau \|v_t\|\cdot\|\sigma(1/\tau\|v_t\|) - \sigma(0)\|^2 + \eta^2 \kappa^2
    \end{align*}
    where the penultimate inequality uses $\|\paragm - \paragd\| \leq \|\paragm\| + \|\paragd\| \leq 2\eta \kappa$.

    We then can find a bound of $\eta$ for $\|\paragm - \paraopt\|^2 \leq \|\paragd - \paraopt\|^2$:
    \[\eta < \frac{2\tau\|v_t\| \cdot \|\sigma(1/\tau\|v_t\|) - \sigma(0)\|^2 - 4\epsilon \kappa}{\kappa^2}.\]
\end{proof}

\textbf{Theorem~\ref{thm:final} Discussion.}
%
%
%
%
%
%
%
%
%
%
Our convergence bound reveals conditions under which Grad-Mimic outperforms GD.
First, the learning rate $\eta$ should be small.
A large $\eta$ causes the $\kappa^2$ term to dominate, leading to large, erroneous steps in Grad-Mimic compared to GD.
This is particularly evident when the temperature is small, as Grad-Mimic amplifies specific gradients while GD averages conflicting ones, potentially overshooting the optimum if the gradient magnitude $\kappa$ is large.
Second, $\kappa$ should be small to avoid similar overshooting issues.
Third, $\epsilon = \|\pararef - \paraopt\|$ should be small.
This follows from our intuition: a pre-trained reference model $\pararef$ serves as a proxy to the optimum $\paraopt$, \emph{prioritizing weight movements toward this proxy can accelerate convergence.}

%
%
%
%
%
%
%
%

\section{Grad-Mimic Algorithms}
\label{app:algorithm}

Next, we summarize algorithm steps.
Grad‑Mimic proceeds in two stages.
Stage 1 (Sec. \ref{sec:online}) computes mimic scores on the fly and updates model weights through re‑weighting.
Stage 2 (Sec. \ref{sec:offline}) binarizes mimic scores and aggregates utility assessments to obtain the final filtering decisions.
Our algorithm supports various training paradigms, including supervised learning and self-supervised learning, as demonstrated in Sec.~\ref{controllable exp} and Sec.~\ref{datacomp exp}.
Moreover, our experiments encompass a wide-range of training configurations, including \emph{full fine-tuning, linear probing, and training models from scratch.}

\begin{algorithm}[t]
\caption{Online Batch Re-weighting (Stage 1)}
\label{alg:grad-mimic}
\begin{algorithmic}[1]
\Require 
    Pre-trained reference model $\boldsymbol{\theta}_{\text{ref}}$, 
    Initial model $\boldsymbol{\theta}_0$, 
    Dataset $\mathcal{D}$, 
    Training steps $T$, 
    Batch size $b$, 
    Learning rate $\eta$, 
    Temperature $\tau$.
\Ensure 
    Normalized mimic scores $\overline{m}_{i,t}$ for all sampled indices $i$

\State \textbf{Initialize:} $\boldsymbol{\theta} \gets \boldsymbol{\theta}_0$
\For{$t = 1$ \textbf{to} $T$}
    \State Sample a mini-batch $\mathcal{B}_t = \{s_i\}_{i=1}^b \sim \mathcal{D}$ \Comment{Uniform sampling}
    \State $\mathbf{v}_t \gets \boldsymbol{\theta}_{\text{ref}} - \boldsymbol{\theta}_t$ \hfill \Comment{Target direction vector}
    
    \For{each sample $s_i \in \mathcal{B}_t$}
        \State $\mathbf{g}_{i,t} \gets \nabla_{\boldsymbol{\theta}_t} \ell(s_i; \boldsymbol{\theta}_t)$ \hfill \Comment{Compute per-sample gradient}
        \State $m_{i,t} \gets \frac{\langle -\mathbf{g}_{i,t}, \mathbf{v}_t \rangle}{\|\mathbf{v}_t\|}$ \hfill \Comment{Project gradient onto target direction}
    \EndFor
    
    \State $\overline{m}_{:,t} \gets \text{Softmax}\left( \frac{m_{:,t}}{\tau} \right)$ where $\overline{m}_{i,t} = \frac{\exp(m_{i,t} / \tau)}{\sum_{j=1}^b \exp(m_{j,t} / \tau)}$ \hfill \Comment{Softmax normalization over the mini-batch}
    
    \State $\boldsymbol{\theta}_{t+1} \gets \boldsymbol{\theta}_t - \eta \sum_{i=1}^b \overline{m}_{i,t} \mathbf{g}_{i,t}$ \hfill \Comment{Re-weighted gradient descent}
\EndFor
\end{algorithmic}
\end{algorithm}

\begin{algorithm}[t!]
\caption{Offline Sample Selection (Stage 2)}
\label{alg:selection}
\begin{algorithmic}[1]
\Require 
    Normalized mimic score matrix $\overline{\mathbf{M}} = \{\overline{m}_{i,t}\} \in \mathbb{R}^{n \times T}$, 
    Binarization method $\texttt{mode}$, 
    Batch size $b$.
\Ensure 
    Aggregated retain probabilities $p_i$, retained subset $\mathcal{S}$.

\State Initialize binary decision matrix $\mathbf{B} \gets \mathbf{0}^{n \times T}$
\For{$t = 1$ \textbf{to} $T$}
    \State $\mathbf{m} \gets \overline{\mathbf{M}}_{:,t}$
    
    \If{$\texttt{mode} = \text{threshold}$}
        \State $\tau \gets 1/b$
        \State $\mathbf{B}_{:,t} \gets \mathbb{I}[\mathbf{m} > \tau]$
    \ElsIf{$\texttt{mode} = \text{clustering}$}
        \State Perform 1D $k$-means on $\mathbf{m}$ with $k=2$
        \State $\mathbf{B}_{:,t} \gets \text{Labels of cluster}$
    \ElsIf{$\texttt{mode} = \text{top-k}$}
        \State $\tau \gets k\text{-th percentile of } \mathbf{m}$
        \State $\mathbf{B}_{:,t} \gets \mathbb{I}[\mathbf{m} > \tau]$
    \EndIf
\EndFor

\State Train Snorkel \texttt{LabelModel} $\mathcal{LM}$ using $\mathbf{B}$ as labeling signals \hfill \Comment{Weak Supervision Aggregation}
\State $\mathbf{P} \gets \mathcal{LM}.\text{predict\_proba}(\mathbf{B})$ \hfill \Comment{Shape: $n \times 2$}
\State $p_i \gets \mathbf{P}_{i,1}$ for each sample $i$ \hfill \Comment{Extract probability of "Retain"}
\State $\mathcal{S} \gets \{i \mid p_i > 0.5\}$ \hfill \Comment{Filter subset by decision threshold}

\State \Return $\{p_i\}_{i=1}^n, \mathcal{S}$
\end{algorithmic}
\end{algorithm}
\section{Experimental Details}
\label{app:experiment}

We shift the focus from theoretical analysis and algorithm steps to empirical results. 
We first detail the training configurations and computational resources used in our experiments.
We plan to make the source code publicly available upon the publication of this paper.

\textbf{Mislabeled Sample Detection.}
For the experiments of mislabeled sample detection (Sec.~\ref{controllable exp}), we fine-tune ViT-B/16 models pre-trained on the ImageNet-21k dataset~\cite{russakovsky2015imagenet}.
Training is conducted with a batch size of 32, a learning rate of 1e-4, and the AdamW optimizer.
To ensure convergence, we set the training steps to 10 for the DTD and Flowers102 datasets and to 5 for the remaining datasets.
We evaluate Grad-Mimic across various temperature values $\tau$ (1.0, 0.9, 0.8, 0.7, 0.6, and 0.5) and report performance at $\tau = 0.5$ in the main paper (Table~\ref{tab:stage 1 results in moderate dataset}).
Results for other temperature values are provided in Table~\ref{tab:simulation_exp_temp}.
To simulate the reference model, we train ViT-B/16 models on noise-free data split, using different random seeds for weight initialization.
All experiments are conducted on an NVIDIA Tesla A100.

\textbf{Data Curation in Large-scale Web Datasets.}
For the experiments of multimodal data selection (Sec.~\ref{datacomp exp}), we follow the training protocol used in  DataComp\footnote{We identified that some URLs provided by DataComp dataset are now broken. This means that our results might not be comparable to previous approaches on the DataComp Leaderboard. See \href{https://github.com/mlfoundations/datacomp/issues/3}{here} for details.}~\cite{cherti2023reproducible, gadre2023datacomp} and train CLIP models from scratch using contrastive objective on image-caption pairs.
Each model is trained for 5 epochs with a batch size of 4096.
The total number of seen samples is 12.8M for the small-scale dataset and 128M for the medium-scale dataset.

We analyze different components of the reference model’s weights, specifically the final MLP layers in the image and text encoders, respectively.
We evaluate Grad-Mimic's effectiveness using temperature values $\tau$ of 0.03, 0.05, 0.07, 0.3, and 0.5.
Performance is assessed across 38 diverse downstream tasks~\cite{gadre2023datacomp}.
Results for $\tau = 0.5$ and $\tau = 0.05$ are reported in the main paper (Table~\ref{tab:datacomp stage 1}), and comprehensive results are provided in Table~\ref{tab:datacomp stage 1 full table}.
These experiments are conducted on 8 NVIDIA Tesla A100 GPUs.
\section{Comparing to Model-based Approaches}
\label{app:comparison}

We provide a further discussion when comparing Grad-Mimic to model-based techniques, specifically reference model-based approaches and influence function-based methods.

\begin{table}[t]
\centering
\caption{\textbf{Grad-Mimic outperforms other reference model-based methods}: Using reference model last-layer weights is more effective to prioritize samples for learning while reducing substantial computational overheads (see efficiency advantages are in Appendix~\ref{app:efficiency}).}
\label{tab:rho loss}
\resizebox{\textwidth}{!}{%
\begin{tabular}{@{}l|ccc|ccc|ccc|ccc|ccc|ccc|ccc@{}}
\toprule
\textit{\textbf{}} & \multicolumn{3}{c|}{\textbf{DTD}} & \multicolumn{3}{c|}{\textbf{Flowers102}} & \multicolumn{3}{c|}{\textbf{STL10}} & \multicolumn{3}{c|}{\textbf{Oxford-IIIT Pet}} & \multicolumn{3}{c|}{\textbf{CIFAR10}} & \multicolumn{3}{c|}{\textbf{CIFAR100}} & \multicolumn{3}{c}{\cellcolor[HTML]{FFD99A}\textbf{Average}} \\ \midrule
Noise Level & 0.4 & 0.5 & 0.6 & 0.4 & 0.5 & 0.6 & 0.4 & 0.5 & 0.6 & 0.4 & 0.5 & 0.6 & 0.4 & 0.5 & 0.6 & 0.4 & 0.5 & 0.6 & 0.4 & 0.5 & 0.6 \\ \midrule
\rowcolor[HTML]{EFEFEF} 
Mini-Batch SGD & 51.91 & 47.71 & 44.44 & 28.64 & 21.50 & 15.43 & 96.26 & 95.49 & 93.56 & 87.33 & 85.55 & 83.51 & 92.86 & 92.14 & 91.19 & 75.56 & 74.38 & 73.25 & 72.09 & 69.46 & 66.40 \\
Rho-Loss & 54.10 & 52.39 & 48.46 & 41.55 & 35.71 & 30.02 & 97.10 & 96.83 & 96.49 & 88.53 & 87.79 & 86.86 & 94.07 & 93.83 & \textbf{93.82} & 76.82 & 76.05 & 75.93 & 75.36 & 73.77 & 71.93 \\ \midrule
\rowcolor[HTML]{EFEFEF} 
\textbf{Grad-Mimic} & \textbf{54.68} & \textbf{54.10} & \textbf{50.43} & \textbf{42.75} & \textbf{37.10} & \textbf{31.71} & \textbf{97.16} & \textbf{97.00} & \textbf{96.90} & \textbf{88.80} & \textbf{88.25} & \textbf{87.14} & \textbf{94.15} & \textbf{93.92} & 93.80 & \textbf{77.24} & \textbf{76.82} & \textbf{76.05} & \textbf{75.80} & \textbf{74.53} & \textbf{73.01} \\ \bottomrule
\end{tabular}
}
\end{table}

\begin{table}[t]
    \centering
    \caption{\textbf{Grad-Mimic outperforms Rho-Loss under both setups}: using full or hold-out training set to the build reference model.}
    \label{tab:rho loss 2}
    \begin{tabular}{@{}lcc@{}}
    \toprule
    & \textbf{CIFAR10} & \textbf{CIFAR100} \\ \midrule
    \multicolumn{3}{l}{\textit{Using full training set}} \\ 
    \quad Rho-Loss & 93.83 & 76.05 \\
    \quad Grad-Mimic & \textbf{93.92} & \textbf{76.82} \\ \midrule
    \multicolumn{3}{l}{\textit{Using hold-out training set}} \\
    \quad Rho-Loss & 93.26 & 75.11 \\
    \quad Grad-Mimic & \textbf{93.38} & \textbf{75.81} \\ \bottomrule
    \end{tabular}
    \label{tab:rho_loss_comparison}
\end{table}

\begin{figure*}[t!]
    \begin{center}
    \includegraphics[width=\linewidth]{figures/using_golden_dataset.pdf}
    \end{center}
    \centering\caption{\textbf{Grad-Mimic outperforms approximated influence function-based methods:} Using reference model's geometry positioning as selection guide offer advantages on their effectiveness and greater accessibility.}
    \label{fig:using golden dataset as guide appendix}
\end{figure*}

\textbf{Reference Model-Based Approaches.}
We compare Grad-Mimic to Rho-Loss~\cite{mindermann2022prioritized}, a representative method that prioritizes samples based on excess loss, computed as the difference between the loss on the current model $\ell(\theta_t)$ and the loss on a pre-trained reference model $\ell(\theta_{\text{ref}})$.
For a fair comparison, we simulate pre-trained reference models using two approaches: (i) training on a noise-free hold-out dataset as used in Rho-Loss, and (ii) first training on the entire training dataset.
We follow our mislabeled sample detection setup and evaluate the model performance on the testing datasets.

We first demonstrate model performance using reference model that is trained on entire training dataset.
As shown in Table~\ref{tab:rho loss}, Grad-Mimic consistently outperforms Rho-Loss across all datasets and noise levels.
Next, we benchmark both methods on CIFAR10 and CIFAR100 with 50\% label noise, using a reference model trained on a noise-free hold-out split.
The latter setup simulates more realistic scenarios where the reference model, \textbf{\emph{acting as a distilled or weaker approximation of ideal weights, is used.}} 
The results, presented in Table~\ref{tab:rho_loss_comparison}, again demonstrate that Grad-Mimic achieves higher testing accuracy than Rho-Loss.

Unlike Rho-Loss, which measures sample utility through differences in \emph{loss space}, Grad-Mimic instead operates in weight space, focusing in particular on last-layer weight discrepancies.
This perspective avoids the additional inference passes required by Rho-Loss and instead relies on a simpler and more computationally efficient operation---matrix subtraction.

\textbf{Influence Function-based Approaches.}
Data influence functions provide a principled way of evaluating the impact of individual training samples on model parameters~\cite{hampel1974influence, koh2017understanding}.
However, they require computing the inverse Hessian of the training loss, which is computationally expensive and often limits their practical applicability.
Recent studies have explored efficient approximations to estimate sample influence by measuring gradient alignment with a high-quality validation set~\cite{xia2024less, wu2024icons, wang2024greats}.
We adapt these methods within Grad-Mimic framework by replacing the vector induced by reference model's weights ($v_t$) with \emph{gradients computed from a noise-free validation dataset.}
We evaluate these adaptations on six image datasets by adding 50\% label noise and assess the impact of validation set size by varying the number of samples drawn from per class (using 1, 5, 10, 15, and 20).

Results in Figure~\ref{fig:using golden dataset as guide appendix} show that larger validation sets improve the accuracy of influence estimates, enabling models to better focus on high-value samples and thereby achieve higher performance.
However, Grad-Mimic, which leverages the positioning of pre-trained reference weights, consistently outperforms influence-based approximations.
Moreover, relying on labeled validation sets introduces dataset dependencies, incur computational overheads from validation gradient computations, and poses challenges in ensuring their quality, as noted in the limitations of GREATS~\cite{wang2024greats}.
For example, in the Flowers102 and CIFAR100 datasets, achieving satisfactory performance required over 2{,}000 correctly labeled samples---\emph{\textbf{yet still fell short of Grad-Mimic's results}}.
\section{Efficiency Advantages}
\label{app:efficiency}

\begin{figure}[t!]
    \centering
    {\includegraphics[width=0.65\textwidth]{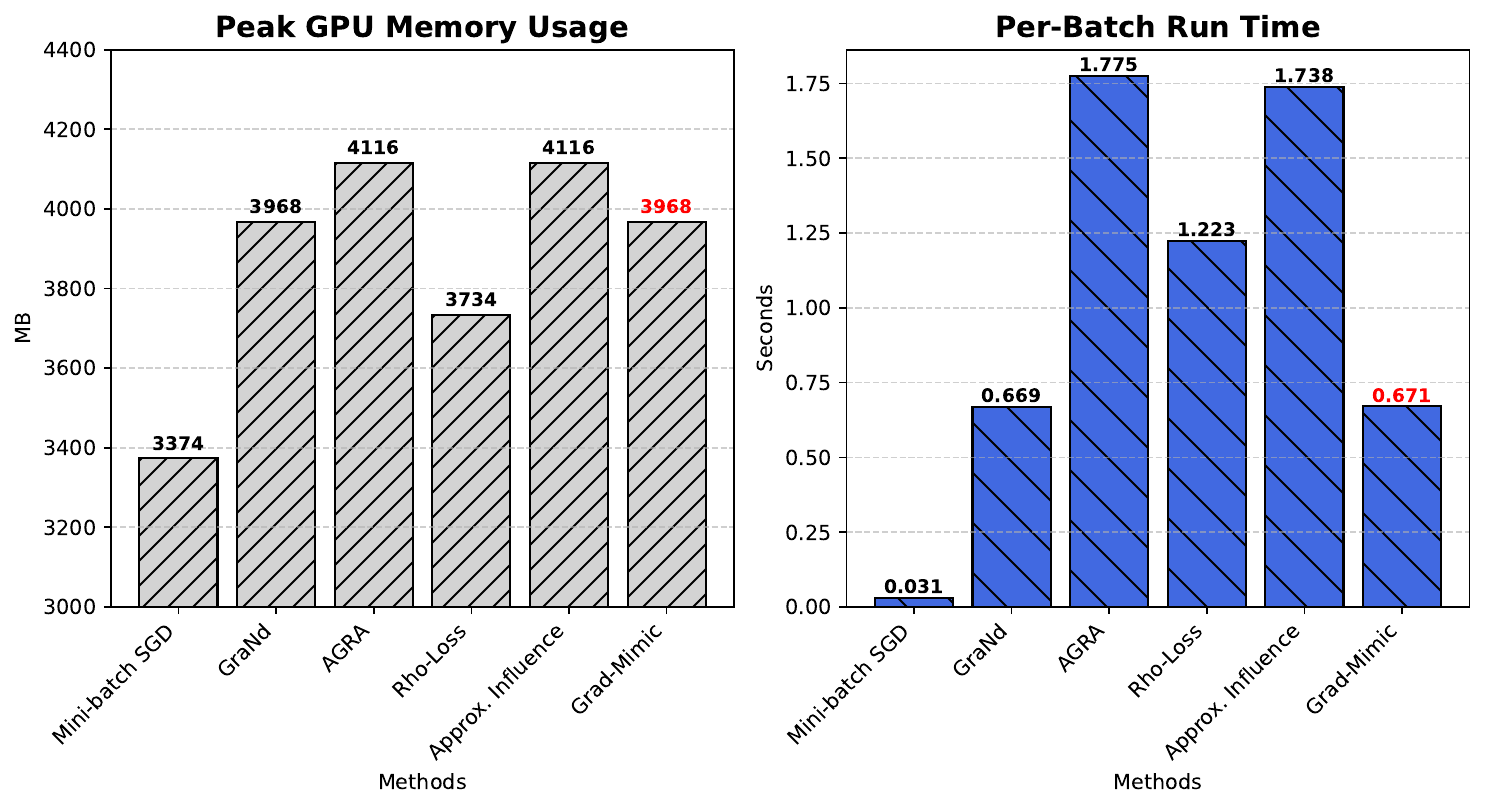}}
    \caption{Grad-Mimic incurs lower computational cost (reduced runtime) and smaller memory usage than other model-based baselines.}
    \label{fig:more efficiency}
\end{figure}

\begin{figure*}[t!]
    \centering
    \includegraphics[width=0.30\textwidth]{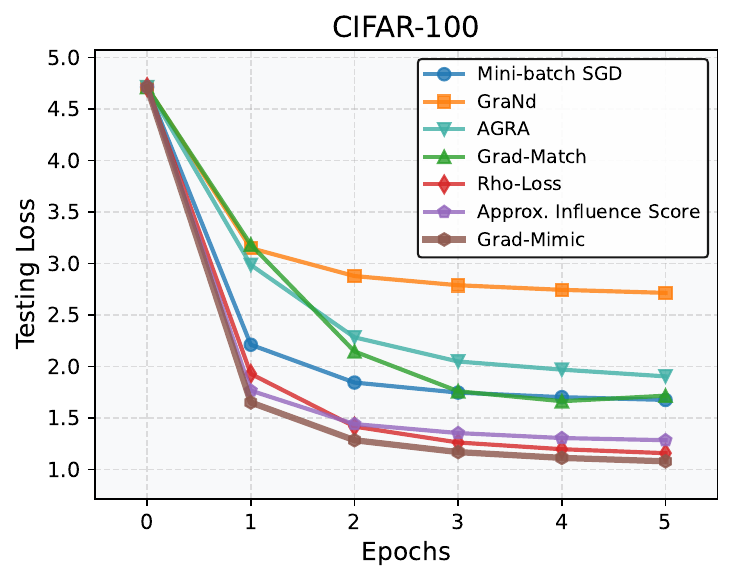}%
    \includegraphics[width=0.355\textwidth]{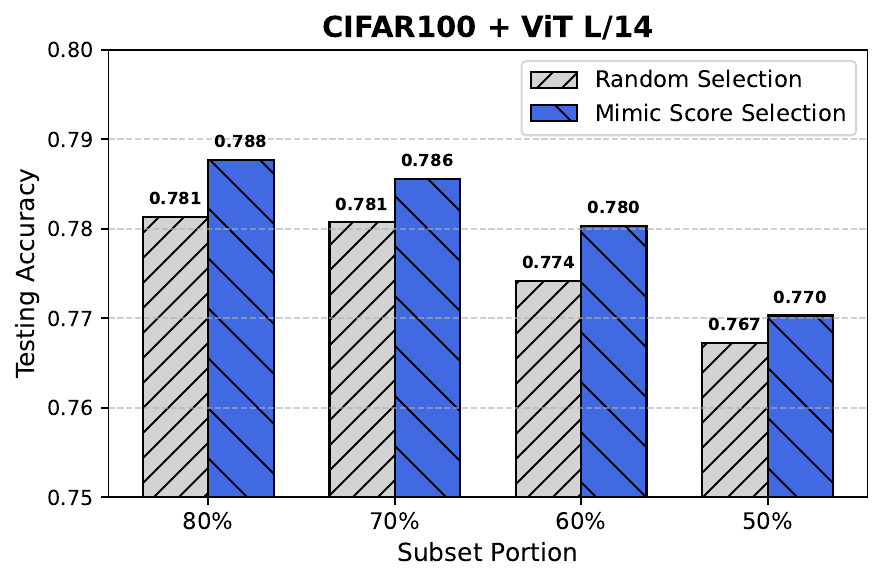}%
    \includegraphics[width=0.355\textwidth]{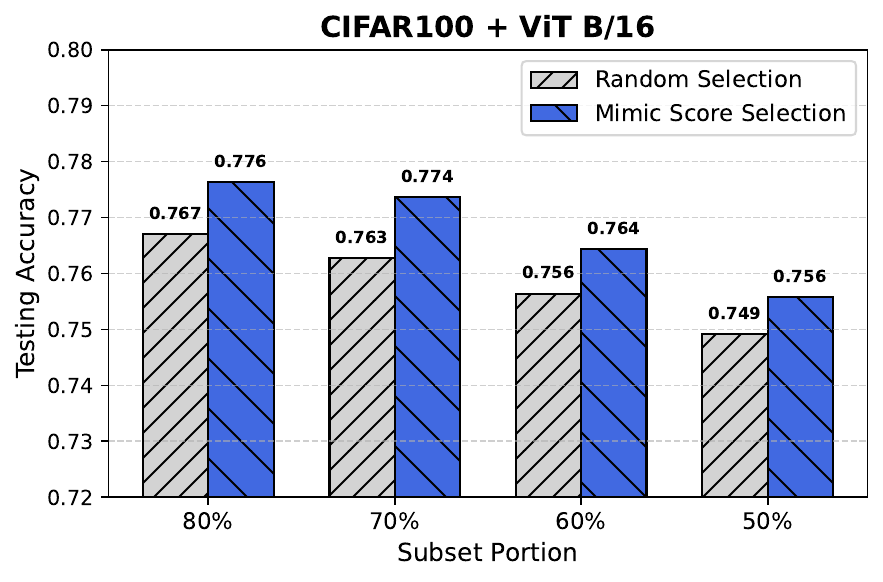}%
    \caption{Grad-Mimic leads to the fastest convergence and improved data utilization (20\% training data can be reduced).}
    \label{fig:data_selection}
\end{figure*}

We provide four distinct lines of evidence for the efficiency gains in our technique.
They are (i) \emph{computational efficiency}, (ii) \emph{memory usage}, (iii) \emph{learning efficiency}, and (iv) \emph{data utilization}.

\textbf{Computational Efficiency and Memory Usage.} 
First, we benchmark \emph{per-batch training runtime} and \emph{peak GPU memory consumption} across considered baselines, as presented in Figure~\ref{fig:more efficiency}.
While mini-batch SGD requires the lowest runtime and memory usage, this involves \emph{no data selection} and leads to suboptimal performance.
Grad-Mimic achieves computational runtime and memory usage comparable to GraNd, which prioritizes samples solely based on the norm of training gradients.
In contrast, methods like AGRA and approximated influence function-based approaches incur higher computational costs due to additional gradient computations from either another training batch or a noise-free validation dataset.
\emph{\textbf{This creates a 2.6 times computational overheads than Grad-Mimic}}.
Similarly, Grad-Mimic outperforms Rho-Loss in efficiency, as Rho-Loss requires performing additional forward passes.
Regarding memory usage, Rho-Loss can exceed all these methods if the reference model size scales significantly.

Second, we provide a FLOP analysis to formally justify Grad-Mimic's efficiency.
Let $F$ denote the FLOPs for one forward pass (using single sample), $B \approx 2F$ for one backward pass, $d$ the full model parameter count, $d_L$ the last-layer parameter count ($d_L \ll d$), $n$ the training batch size, and $m$ the validation set size (used by influence function methods only).
Grad-Mimic's extra cost over standard SGD consists of three terms, all operating over $d_L$ parameters only: 
(1) one weight subtraction $v_t = \theta_{\text{ref}} - \theta_t$ ($d_L$ FLOPs, once per batch), 
(2) one $\ell_2$ norm $\|v_t\|$ ($d_L$ FLOPs, once per batch), and 
(3) one projection $\langle -g_i, v_t \rangle / \|v_t\|$ per sample ($2d_L$ FLOPs each for the dot product; the scalar division is negligible).
The total extra cost is $(2n+2) \cdot d_L$.

Results are shown in Table~\ref{tab:flop-analysis}.
Grad-Mimic compares to baselines:
\begin{itemize}[nosep,topsep=0pt,leftmargin=*]
    \item \textbf{vs.\ Grad-Norm}: GraNd computes $\|g_i\|$ over all $d$ parameters per sample, adding $n \cdot d$ FLOPs.
    \item \textbf{vs.\ AGRA}: AGRA draws a second batch of $n$ samples and runs full backward passes, then computes cosine similarity over all $d$ parameters, adding $3nF + 2n \cdot d$ FLOPs.
    \item \textbf{vs.\ Rho-Loss}: Rho-Loss performs an additional forward pass through the reference model for every training sample ($+nF$).
    \item \textbf{vs.\ Influence functions}: The dominant term $2nm \cdot d$ grows multiplicatively with batch size, validation set size, and model width simultaneously. As $m$ increases for better influence estimates, cost compounds.
\end{itemize}

\begin{table}[t]
\centering
\caption{FLOP comparison of data selection methods per training batch.}
\label{tab:flop-analysis}
\begin{tabular}{l l l}
\toprule
Method & Total FLOPs per batch & Extra overhead vs.\ SGD \\
\midrule
Mini-batch SGD      & $n \cdot (F + B) = 3nF$                       & --- \\
\textbf{Grad-Mimic} & $3nF + 2d_L + 2n \cdot d_L$                   & $+(2n+2) \cdot d_L$ \\
Grad-Norm           & $3nF + n \cdot d$                             & $+n \cdot d$ \\
AGRA                & $2n \cdot (F+B) + 2n \cdot d = 6nF + 2n \cdot d$ & $+3nF + 2n \cdot d$ \\
Rho-Loss            & $n \cdot (F + B) + n \cdot F = 4nF$           & $+nF$ \\
Influence function  & $(n+m) \cdot 3F + 2nm \cdot d$                & $+3mF + 2nm \cdot d$ \\
\bottomrule
\end{tabular}
\end{table}

Our efficiency stems from the design choice to measure sample utility using only the last-layer weight difference, which we find sufficient for effective data selection.
For example, in large-scale CLIP pretraining, the corresponding weight matrix has dimensions $3072 \times 768$.
Computing the weight subtraction becomes a lightweight operation compared to standard training steps.

\textbf{Learning Efficiency.}
Next, we evaluate learning efficiency by analyzing convergence curves.
As shown in Figure~\ref{fig:data_selection} (left), experiments on CIFAR100 with 50\% label noise demonstrate that Grad-Mimic converges faster than all the compared methods, achieving higher model performance within a fixed number of training epochs.
Additionally, Figure~\ref{fig:dataset quality estimation} (left) illustrates that, on the DataComp medium-scale dataset (over 100 million samples), Grad-Mimic converges faster than standard training, \emph{\textbf{reducing training steps by 20.7\%.}}

\textbf{Data Utilization.}
We introduce \emph{a new experiment} that measures data utilization without adding label noise.
Instead of simulating mislabeled samples, we rank \emph{a clean training dataset} by their mimic scores and build subsets from the top-k\% entries.
We use a well-trained ViT-B/16 as our reference and perform Grad-Mimic on this clean CIFAR100 dataset.
We evaluate the resulting model performance trained on refined subsets.
Results, presented in Figure~\ref{fig:data_selection}, show that Grad-Mimic’s selected subsets consistently outperform randomly drawn subsets of the same size.
Remarkably, with both ViT‑B/16 and the larger ViT‑L/14, Grad‑Mimic \emph{\textbf{matches the full‑dataset accuracy while reducing the training data by 20\%.
This highlights a direct reduction in data-related computational cost.}}

We note that this experiment assumes access to a fully clean dataset---\emph{an assumption that is often unrealistic in practice}.
Our main paper focuses on more challenging and practical settings involving mislabeled samples, where effective tools for identifying low-value samples are typically unavailable.

These findings---\emph{\textbf{lower runtime per batch, reduced GPU memory usage, faster convergence, and improved data utilization}}---demonstrate that Grad-Mimic achieves significant efficiency gains, supporting our motivation.

\section{Aggregation Strategy Analysis}
\label{app:aggregation}

\begin{table}[t]
\centering
\caption{\textbf{Mislabeled Sample Detection with Alternative Aggregation Methods}: F1-scores demonstrate identification capability across six datasets. Weak supervision techniques outperforms naive aggregation method---majority vote.}
\label{tab:weak_supervision}
\begin{tabular}{@{}lcccccc@{}}
\toprule
& \textbf{CIFAR100} & \textbf{CIFAR10} & \textbf{DTD} & \textbf{Flowers102} & \textbf{Oxford-IIIT Pet} & \textbf{STL10} \\ \midrule
Majority Vote & 90.2 & 90.1 & 89.4 & 93.5 & 90.5 & 91.3 \\
Snorkel & 96.2 & 94.0 & \textbf{97.0} & 97.1 & 97.5 & 95.3 \\
FlyingSquid & 96.7 & 94.0 & \textbf{97.0} & \textbf{97.6} & 97.5 & 97.3 \\
UWS & \textbf{97.4} & \textbf{97.8} & 92.6 & \textbf{97.6} & \textbf{99.0} & \textbf{98.9} \\ \bottomrule
\end{tabular}
\end{table}

\begin{table}[t]
\centering
\caption{\textbf{Runtime Comparison of Stage 2 Aggregation Methods}: Results are measured in seconds. UWS achieves competitive performance with significantly lower computational cost.}
\label{tab:weak_supervision_runtime}
\begin{tabular}{@{}lcccccc@{}}
\toprule
& \textbf{CIFAR100} & \textbf{CIFAR10} & \textbf{DTD} & \textbf{Flowers102} & \textbf{Oxford-IIIT Pet} & \textbf{STL10} \\ 
 & \textit{(50K)} & \textit{(50K)} & \textit{(1.9K)} & \textit{(1K)} & \textit{(3.7K)} & \textit{(5K)} \\ \midrule
Majority Vote & 0.68 & 0.66 & 0.07 & 0.05 & 0.09 & 0.11 \\
Snorkel & 0.84 & \textbf{0.32} & 0.13 & 0.13 & 0.14 & 0.14 \\
FlyingSquid & 0.76 & 0.76 & 0.06 & 0.03 & 0.06 & 0.08 \\
UWS & \textbf{0.28} & 0.32 & \textbf{0.02} & \textbf{0.01} & \textbf{0.02} & \textbf{0.03} \\ \bottomrule
\end{tabular}
\end{table}

\begin{table}[t]
\centering
\caption{\textbf{Effect of Training Data Size in Stage 2 Aggregation}: We show that using only 0.5\% of training data achieves competitive performance across datasets.}
\label{tab:weak_supervision_data}
\begin{tabular}{@{}ccccccc@{}}
\toprule
& \textbf{CIFAR100} & \textbf{CIFAR10} & \textbf{DTD} & \textbf{Flowers102} & \textbf{Oxford-IIIT Pet} & \textbf{STL10} \\ 
Ratio (\%) & \textit{(50K)} & \textit{(50K)} & \textit{(1.9K)} & \textit{(1K)} & \textit{(3.7K)} & \textit{(5K)} \\ \midrule
\textbf{0.5\%} & \textbf{97.2} & \textbf{94.5} & \textbf{98.2} & 86.7 & 98.0 & \textbf{97.4} \\
1\% & 96.5 & 94.1 & 96.6 & 88.9 & \textbf{98.1} & 96.0 \\
5\% & 96.6 & 93.9 & 97.9 & 96.8 & 97.3 & 96.5 \\
10\% & 96.2 & 93.8 & 97.3 & \textbf{97.6} & 97.3 & 96.0 \\ \midrule
100\% & 96.2 & 94.0 & 97.0 & 97.4 & 97.5 & 95.3 \\ \bottomrule
\end{tabular}
\end{table}

Next, we present an additional study evaluating well-known aggregation models commonly used in weak supervision.
Specifically, we examine naive aggregation through majority vote, Snorkel~\cite{ratner2017snorkel}, FlyingSquid~\cite{fu2020fast}, and UWS~\cite{shin2021universalizing}.

\textbf{Comparison of Aggregation Methods.}
We follow the same experimental protocol as in Table~\ref{tab:stage 2 results for noise detection} to assess the performance of these methods used in Grad-Mimic Stage 2. 
Label noise is injected into 50\% of the samples in each dataset, and GMM clustering is then applied as a representative binarization method to convert estimated sample utilities into binary retain/discard votes.

Results, shown in Table~\ref{tab:weak_supervision}, indicate that weak supervision-based methods (Snorkel, FlyingSquid, UWS) consistently outperform majority vote.
This supports our motivation to \textbf{\emph{denoise signals from inaccurate local gradients and capturing the evolving utility of samples throughout training}}.
Notably, Grad-Mimic combined with any of these weak supervision aggregators consistently identifies incorrect samples across all datasets, achieving F1-scores above 95\%.

\textbf{Runtime Analysis.}
We expect Stage 2 incurs minimal computational overheads compared to the Stage 1 phase.
To verify this, we measure the runtime required to aggregate mimic scores and infer filtering decisions for each dataset under different aggregation methods.
Results are reported in Table~\ref{tab:weak_supervision_runtime}. 
As shown, runtime scales with training dataset size.
UWS method achieves the lowest runtime in most cases, while remaining \emph{below one second} across all settings.
This supports that Stage 2 offline selection process \textbf{\emph{is computationally negligible and introduces no significant overheads.}}

\textbf{Effect of Training Data Size.}
Although runtime scales with the size of the training dataset, the amount of data required to learn an effective aggregator can be substantially reduced.
We therefore examine how much data is needed to train an accurate weak supervision label model.
Here, we use Snorkel model as example.
We randomly select 0.5\%, 1\%, 5\%, and 10\% of the samples (along with their stored mimic scores) to train the label model. 
For each sampling ratio, we conduct five independent trials and report the average F1-scores.
Results are displayed in Table~\ref{tab:weak_supervision_data}, which indicate that loading and modeling the full dataset is unnecessary: \textbf{\emph{in most cases, using as little as 0.5\% of the data achieves performance competitive to training on the entire dataset.}}
In other words, we can use limited sample assessments to build a reliable ensemble filter to automate effective data curation.

\section{Predicting Pre-training Dataset.}
\label{app:reverse engineering}

\begin{table}[t]
\centering
\caption{\textbf{Mimic Score for Data Membership Prediction}: Mimic score-based selection significantly outperforms random selection in identifying samples used to train the reference model.}
\begin{tabular}{@{}lcccc@{}}
\toprule
 & \multicolumn{2}{c}{\textbf{Random Selection}} & \multicolumn{2}{c}{\textbf{Mimic Score Selection}} \\ \cmidrule(lr){2-3} \cmidrule(lr){4-5}
Filtering Strategy & Jaccard Similarity & Overlap Percentage & Jaccard Similarity & Overlap Percentage\\ \midrule
CLIP-ViT B/32 Top 30\% & 0.149 & 0.258 & \textbf{0.232} & \textbf{0.376} \\
CLIP-ViT L/14 Top 30\% & 0.150 & 0.260 & \textbf{0.236} & \textbf{0.382} \\
DataComp'23 Top-ranked & 0.166 & 0.284 & \textbf{0.271} & \textbf{0.426} \\ \bottomrule
\end{tabular}
\label{tab:reverse_engineering}
\end{table}

We explore an interesting application of mimic score: \emph{predicting the pre-training dataset.}
Specifically, we ask: \emph{how accurately can Grad-Mimic check whether a given sample was used to train a reference model?}
We test this hypothesis on the small-scale DataComp dataset.
We first apply various pre-built filters to curate datasets and train CLIP models on each.
These creates several reference models to follow.
Then, we target their final-layer weights to mimic and apply Grad-Mimic to train \emph{on the entire data pool}.
Our goal is to see whether the top-ranked samples (matched in size to the curated datasets) identified by mimic score appear in their training dataset.

We evaluate our approach against random selection using Jaccard similarity and percentage of overlap. 
As shown in Table~\ref{tab:reverse_engineering}, mimic score-based selection identifies more samples used to train the reference model compared to random sampling.
We achieve a \textbf{42.6\%} overlap with DataComp'23 best-performing filtered dataset~\cite{huang2024multimodal} \textbf{\emph{by simply mimicking final-layer weights without direct access to their filtering steps.}}

\section{Ablation Studies}
\label{app:more exp}

We present extensive ablation studies of Grad-Mimic.
In particular, we analyze the framework robustness under different design choices and the effects of the reference model to address reference model dependency.

\textbf{Full Fine-tuning Results.}
We report the complete Stage 1 results for the mislabeled sample experiments in Table~\ref{tab:stage 1 full results in moderate dataset}.
Beyond the linear probing setting (shown in the main paper), we also consider a full fine-tuning regime where Grad-Mimic targets only the final layer for mimicry while updating all model parameters based on derived mimic scores.
In this setup, Grad-Mimic can also effectively guide weight updates across the entire network and ultimately outperforms all baseline methods in average.

\begin{table}[t!]
\centering
\caption{
\textbf{Stage 1 Full Results in Mislabeled Sample Experiment}: 
Both training configurations validate the claim that using mimic scores can effectively down-weight mislabeled samples during training, improving data efficiency and denoising capability.
}
\resizebox{\linewidth}{!}{%
\begin{tabular}{@{}lccccccccccccccccccccc@{}}
\toprule
\multicolumn{1}{l|}{\cellcolor[HTML]{CEDCEB}\textit{\textbf{Full Fine-tuning}}} & \multicolumn{3}{c|}{\textbf{DTD}} & \multicolumn{3}{c|}{\textbf{Flowers102}} & \multicolumn{3}{c|}{\textbf{STL10}} & \multicolumn{3}{c|}{\textbf{Oxford-IIIT Pet}} & \multicolumn{3}{c|}{\textbf{CIFAR10}} & \multicolumn{3}{c|}{\textbf{CIFAR100}} & \multicolumn{3}{c}{\cellcolor[HTML]{FFD99A}\textbf{Average}} \\ \midrule
\multicolumn{1}{l|}{Noise Level} & 0.4 & 0.5 & \multicolumn{1}{c|}{0.6} & 0.4 & 0.5 & \multicolumn{1}{c|}{0.6} & 0.4 & 0.5 & \multicolumn{1}{c|}{0.6} & 0.4 & 0.5 & \multicolumn{1}{c|}{0.6} & 0.4 & 0.5 & \multicolumn{1}{c|}{0.6} & 0.4 & 0.5 & \multicolumn{1}{c|}{0.6} & 0.4 & 0.5 & 0.6 \\ \midrule
\rowcolor[HTML]{EFEFEF} 
\multicolumn{1}{l|}{\cellcolor[HTML]{EFEFEF}Mini-batch SGD} & 46.91 & 42.02 & \multicolumn{1}{c|}{\cellcolor[HTML]{EFEFEF}33.24} & 65.54 & 55.90 & \multicolumn{1}{c|}{\cellcolor[HTML]{EFEFEF}39.84} & 75.35 & 58.57 & \multicolumn{1}{c|}{\cellcolor[HTML]{EFEFEF}49.98} & 70.43 & 64.62 & \multicolumn{1}{c|}{\cellcolor[HTML]{EFEFEF}55.63} & 90.31 & 86.69 & \multicolumn{1}{c|}{\cellcolor[HTML]{EFEFEF}82.01} & 69.42 & 63.77 & \multicolumn{1}{c|}{\cellcolor[HTML]{EFEFEF}\textbf{59.28}} & 69.66 & 61.93 & 53.33 \\
\multicolumn{1}{l|}{GraNd} & 29.73 & 8.83 & \multicolumn{1}{c|}{15.43} & 10.80 & 2.78 & \multicolumn{1}{c|}{27.92} & 78.39 & 52.32 & \multicolumn{1}{c|}{64.30} & 56.83 & 45.38 & \multicolumn{1}{c|}{45.43} & 87.56 & 83.79 & \multicolumn{1}{c|}{80.04} & 18.65 & 8.83 & \multicolumn{1}{c|}{7.84} & 46.99 & 33.66 & 40.16 \\
\rowcolor[HTML]{EFEFEF} 
\multicolumn{1}{l|}{\cellcolor[HTML]{EFEFEF}AGRA} & 46.28 & 37.29 & \multicolumn{1}{c|}{\cellcolor[HTML]{EFEFEF}32.39} & 0.62 & 0.93 & \multicolumn{1}{c|}{\cellcolor[HTML]{EFEFEF}0.82} & \textbf{84.35} & 79.94 & \multicolumn{1}{c|}{\cellcolor[HTML]{EFEFEF}78.17} & 76.97 & 77.11 & \multicolumn{1}{c|}{\cellcolor[HTML]{EFEFEF}69.39} & 89.23 & 86.79 & \multicolumn{1}{c|}{\cellcolor[HTML]{EFEFEF}68.09} & 11.96 & 8.44 & \multicolumn{1}{c|}{\cellcolor[HTML]{EFEFEF}6.88} & 51.57 & 48.42 & 42.62 \\
\multicolumn{1}{l|}{Grad-Match} & 48.88 & 42.66 & \multicolumn{1}{c|}{33.24} & 37.92 & 54.94 & \multicolumn{1}{c|}{42.62} & 83.86 & \textbf{81.16} & \multicolumn{1}{c|}{77.18} & 74.35 & 61.73 & \multicolumn{1}{c|}{51.16} & 44.48 & 35.69 & \multicolumn{1}{c|}{37.59} & 69.41 & \multicolumn{1}{l}{66.21} & \multicolumn{1}{l|}{20.09} & 58.82 & 57.07 & 43.65 \\ \midrule
\rowcolor[HTML]{EFEFEF} 
\multicolumn{1}{l|}{\cellcolor[HTML]{EFEFEF}\textbf{Grad-Mimic}} & \textbf{49.20} & \textbf{42.82} & \multicolumn{1}{c|}{\cellcolor[HTML]{EFEFEF}\textbf{33.83}} & \textbf{68.58} & \textbf{56.46} & \multicolumn{1}{c|}{\cellcolor[HTML]{EFEFEF}\textbf{44.27}} & 72.06 & 71.85 & \multicolumn{1}{c|}{\cellcolor[HTML]{EFEFEF}\textbf{83.09}} & \textbf{81.98} & \textbf{78.30} & \multicolumn{1}{c|}{\cellcolor[HTML]{EFEFEF}\textbf{73.34}} & \textbf{90.52} & \textbf{89.07} & \multicolumn{1}{c|}{\cellcolor[HTML]{EFEFEF}66.41} & \textbf{73.97} & \textbf{74.31} & \multicolumn{1}{c|}{\cellcolor[HTML]{EFEFEF}24.02} & \textbf{72.72} & \textbf{68.60} & \textbf{54.16} \\ \midrule
& \multicolumn{1}{l}{} & \multicolumn{1}{l}{} & \multicolumn{1}{l}{} & \multicolumn{1}{l}{} & \multicolumn{1}{l}{} & \multicolumn{1}{l}{} & \multicolumn{1}{l}{} & \multicolumn{1}{l}{} & \multicolumn{1}{l}{} & \multicolumn{1}{l}{} & \multicolumn{1}{l}{} & \multicolumn{1}{l}{} & \multicolumn{1}{l}{} & \multicolumn{1}{l}{} & \multicolumn{1}{l}{} & \multicolumn{1}{l}{} & \multicolumn{1}{l}{} & \multicolumn{1}{l}{} & \multicolumn{1}{l}{} & \multicolumn{1}{l}{} & \multicolumn{1}{l}{} \\ \midrule
\multicolumn{1}{l|}{\cellcolor[HTML]{CEDCEB}\textit{\textbf{Linear Probing}}} & \multicolumn{3}{c|}{\textbf{DTD}} & \multicolumn{3}{c|}{\textbf{Flowers102}} & \multicolumn{3}{c|}{\textbf{STL10}} & \multicolumn{3}{c|}{\textbf{Oxford-IIIT Pet}} & \multicolumn{3}{c|}{\textbf{CIFAR10}} & \multicolumn{3}{c|}{\textbf{CIFAR100}} & \multicolumn{3}{c}{\cellcolor[HTML]{FFD99A}\textbf{Average}} \\ \midrule
\multicolumn{1}{l|}{Noise Level} & 0.4 & 0.5 & \multicolumn{1}{c|}{0.6} & 0.4 & 0.5 & \multicolumn{1}{c|}{0.6} & 0.4 & 0.5 & \multicolumn{1}{c|}{0.6} & 0.4 & 0.5 & \multicolumn{1}{c|}{0.6} & 0.4 & 0.5 & \multicolumn{1}{c|}{0.6} & 0.4 & 0.5 & \multicolumn{1}{c|}{0.6} & 0.4 & 0.5 & 0.6 \\ \midrule
\rowcolor[HTML]{EFEFEF} 
\multicolumn{1}{l|}{\cellcolor[HTML]{EFEFEF}Mini-batch SGD} & 51.91 & 47.71 & \multicolumn{1}{c|}{\cellcolor[HTML]{EFEFEF}44.44} & 28.64 & 21.50 & \multicolumn{1}{c|}{\cellcolor[HTML]{EFEFEF}15.43} & 96.26 & 95.49 & \multicolumn{1}{c|}{\cellcolor[HTML]{EFEFEF}93.56} & 87.33 & 85.55 & \multicolumn{1}{c|}{\cellcolor[HTML]{EFEFEF}83.51} & 92.86 & 92.14 & \multicolumn{1}{c|}{\cellcolor[HTML]{EFEFEF}91.19} & 75.56 & 74.38 & \multicolumn{1}{c|}{\cellcolor[HTML]{EFEFEF}73.25} & 72.09 & 69.46 & 66.40 \\
\multicolumn{1}{l|}{GraNd} & 36.22 & 30.59 & \multicolumn{1}{c|}{25.32} & 18.23 & 13.90 & \multicolumn{1}{c|}{10.49} & 68.70 & 59.55 & \multicolumn{1}{c|}{49.68} & 69.47 & 60.02 & \multicolumn{1}{c|}{48.13} & 88.02 & 85.61 & \multicolumn{1}{c|}{81.48} & 71.60 & 70.51 & \multicolumn{1}{c|}{69.27} & 58.71 & 53.36 & 47.40 \\
\rowcolor[HTML]{EFEFEF} 
\multicolumn{1}{l|}{\cellcolor[HTML]{EFEFEF}AGRA} & 41.81 & 36.28 & \multicolumn{1}{c|}{\cellcolor[HTML]{EFEFEF}31.49} & 41.81 & 36.28 & \multicolumn{1}{c|}{\cellcolor[HTML]{EFEFEF}31.49} & 96.19 & 95.15 & \multicolumn{1}{c|}{\cellcolor[HTML]{EFEFEF}93.11} & 85.25 & 82.53 & \multicolumn{1}{c|}{\cellcolor[HTML]{EFEFEF}78.39} & 92.51 & 92.01 & \multicolumn{1}{c|}{\cellcolor[HTML]{EFEFEF}90.99} & 72.50 & 71.30 & \multicolumn{1}{c|}{\cellcolor[HTML]{EFEFEF}69.69} & 71.68 & 68.93 & 65.86 \\
\multicolumn{1}{l|}{Grad-Match} & 51.81 & 47.55 & \multicolumn{1}{c|}{41.33} & 27.00 & 20.21 & \multicolumn{1}{c|}{14.90} & 96.06 & 95.34 & \multicolumn{1}{c|}{93.44} & 86.86 & 85.75 & \multicolumn{1}{c|}{83.18} & 92.62 & 92.04 & \multicolumn{1}{c|}{91.17} & 75.39 & 74.46 & \multicolumn{1}{c|}{73.18} & 71.62 & 69.23 & 66.20 \\
\midrule
\rowcolor[HTML]{EFEFEF} 
\multicolumn{1}{l|}{\cellcolor[HTML]{EFEFEF}\textbf{Grad-Mimic}} & \textbf{54.68} & \textbf{54.10} & \multicolumn{1}{c|}{\cellcolor[HTML]{EFEFEF}\textbf{50.43}} & \textbf{42.75} & \textbf{37.10} & \multicolumn{1}{c|}{\cellcolor[HTML]{EFEFEF}\textbf{31.71}} & \textbf{97.16} & \textbf{97.00} & \multicolumn{1}{c|}{\cellcolor[HTML]{EFEFEF}\textbf{96.90}} & \textbf{88.80} & \textbf{88.25} & \multicolumn{1}{c|}{\cellcolor[HTML]{EFEFEF}\textbf{87.14}} & \textbf{94.15} & \textbf{93.92} & \multicolumn{1}{c|}{\cellcolor[HTML]{EFEFEF}\textbf{93.80}} & \textbf{77.24} & \textbf{76.82} & \multicolumn{1}{c|}{\cellcolor[HTML]{EFEFEF}\textbf{76.05}} & \textbf{75.80} & \textbf{74.53} & \textbf{73.01} \\ \bottomrule
\end{tabular}%
}
\label{tab:stage 1 full results in moderate dataset}
\end{table}

\begin{table}[t]
\centering
\caption{\textbf{Full Stage 1 Results in DataComp Experiment}: 
On both dataset scales, with the aid of publicly available pre-trained weights, Grad-Mimic consistently improves CLIP model performance across all temperature settings.
}
\label{tab:datacomp stage 1 full table}
\resizebox{\linewidth}{!}{%
\begin{tabular}{@{}lllcccccc@{}}
\toprule
Scale & Training Method & Mimic Layer $\theta_{\text{ref}}$ & Temperature $\tau$ & ImageNet & \begin{tabular}[c]{@{}c@{}}ImageNet \\ dist. shifts\end{tabular} & VTAB & Retrieval & \begin{tabular}[c]{@{}c@{}}Average over\\ 38 datasets ($\uparrow$)\end{tabular} \\ \midrule
 & Vanilla Training & --- & --- & \cellcolor[HTML]{EFEFEF}0.026 & \cellcolor[HTML]{EFEFEF}0.035 & \cellcolor[HTML]{EFEFEF}0.139 & \cellcolor[HTML]{EFEFEF}0.114 & \cellcolor[HTML]{EFEFEF}0.131 \\ \cmidrule(l){2-9} 
 &  &  & 0.03 & 0.026 & 0.035 & 0.153 & 0.115 & \textbf{0.136} \\
 &  &  & 0.05 & \cellcolor[HTML]{EFEFEF}0.026 & \cellcolor[HTML]{EFEFEF}0.035 & \cellcolor[HTML]{EFEFEF}0.147 & \cellcolor[HTML]{EFEFEF}0.114 & \cellcolor[HTML]{EFEFEF}\textbf{0.135} \\
 &  &  & 0.07 & 0.026 & 0.035 & 0.151 & 0.116 & \textbf{0.135} \\
 &  &  & 0.3 & \cellcolor[HTML]{EFEFEF}0.026 & \cellcolor[HTML]{EFEFEF}0.034 & \cellcolor[HTML]{EFEFEF}0.154 & \cellcolor[HTML]{EFEFEF}0.112 & \cellcolor[HTML]{EFEFEF}\textbf{0.137} \\
 &  & \multirow{-5}{*}{\begin{tabular}[c]{@{}l@{}}Last MLP Layer \\ in Text Encoder\end{tabular}} & 0.5 & 0.027 & 0.035 & 0.152 & 0.114 & \textbf{0.139} \\ \cmidrule(l){3-9} 
 &  &  & 0.03 & \cellcolor[HTML]{EFEFEF}0.025 & \cellcolor[HTML]{EFEFEF}0.035 & \cellcolor[HTML]{EFEFEF}0.149 & \cellcolor[HTML]{EFEFEF}0.114 & \cellcolor[HTML]{EFEFEF}\textbf{0.133} \\
 &  &  & 0.05 & 0.026 & 0.037 & 0.162 & 0.118 & \textbf{0.146} \\
 &  &  & 0.07 & \cellcolor[HTML]{EFEFEF}0.027 & \cellcolor[HTML]{EFEFEF}0.036 & \cellcolor[HTML]{EFEFEF}0.166 & \cellcolor[HTML]{EFEFEF}0.118 & \cellcolor[HTML]{EFEFEF}\textbf{0.145} \\
 &  &  & 0.3 & 0.026 & 0.035 & 0.161 & 0.114 & \textbf{0.144} \\
\multirow{-11}{*}{Small} & \multirow{-10}{*}{Grad-Mimic} & \multirow{-5}{*}{\begin{tabular}[c]{@{}l@{}}Last MLP Layer \\ in Image Encoder\end{tabular}} & 0.5 & \cellcolor[HTML]{EFEFEF}0.027 & \cellcolor[HTML]{EFEFEF}0.035 & \cellcolor[HTML]{EFEFEF}0.160 & \cellcolor[HTML]{EFEFEF}0.117 & \cellcolor[HTML]{EFEFEF}\textbf{0.145} \\ \midrule
 & Vanilla Training & --- & --- & 0.171 & 0.148 & 0.253 & 0.217 & 0.254 \\ \cmidrule(l){2-9} 
\multirow{-2}{*}{Medium} & Grad-Mimic & \begin{tabular}[c]{@{}l@{}}Last MLP Layer \\ in Image Encoder\end{tabular} & 0.05 & \cellcolor[HTML]{EFEFEF}0.169 & \cellcolor[HTML]{EFEFEF}0.151 & \cellcolor[HTML]{EFEFEF}0.262 & \cellcolor[HTML]{EFEFEF}0.216 & \cellcolor[HTML]{EFEFEF}\textbf{0.258} \\ \bottomrule
\end{tabular}%
}
\end{table}

\begin{table}[!t]
\centering
\caption{\textbf{Effect of Temperature in Stage 1 Simulation Experiment}: Grad-Mimic consistently outperforms baselines across all temperature values, with lower temperatures yielding better testing accuracy by focusing on high-value samples}
\label{tab:simulation_exp_temp}
\begin{tabular}{@{}lccccc@{}}
\toprule
Method & \textbf{DTD} & \textbf{Flowers102} & \textbf{Oxford-IIIT Pet} & \textbf{CIFAR10} & \textbf{CIFAR100} \\ \midrule
SGD & 52.8 & 35.6 & 87.9 & 93.2 & 76.2 \\
Grad-Match & 53.1 & 32.1 & 87.4 & 93.0 & 76.1 \\ \midrule
\multicolumn{6}{l}{\textit{Grad-Mimic with varying temperature $\tau$}} \\
\quad $\tau = 1.0$ & \textbf{56.5} & 44.9 & 89.0 & 94.1 & 77.0 \\
\quad $\tau = 0.9$ & 56.4 & 45.3 & 89.1 & 94.1 & 77.1 \\
\quad $\tau = 0.8$ & 56.3 & 45.8 & 89.1 & 94.1 & 77.2 \\
\quad $\tau = 0.7$ & 56.3 & 46.1 & 89.2 & \textbf{94.2} & 77.2 \\
\quad $\tau = 0.6$ & 56.0 & \textbf{46.4} & \textbf{89.3} & \textbf{94.2} & \textbf{77.3} \\ \bottomrule
\end{tabular}
\end{table}

\begin{table}[!t]
\centering
\caption{Batch reweighting on text-Domain tasks, measured by NLL ($\downarrow$).}
\label{tab:label-noise-nll}
\begin{tabular}{l ccc ccc}
\toprule
& \multicolumn{3}{c}{20\% Label Noise} & \multicolumn{3}{c}{30\% Label Noise} \\
\cmidrule(lr){2-4} \cmidrule(lr){5-7}
Dataset & SFT & Influence Function & Grad-Mimic & SFT & Influence Function & Grad-Mimic \\
\midrule
\textbf{ARC Challenge} & 3.463 & 3.458 & \textbf{3.440} & 3.471 & 3.458 & \textbf{3.445} \\
\textbf{ARC Easy}      & 3.207 & 3.271 & \textbf{3.181} & 3.207 & 3.271 & \textbf{3.196} \\
\textbf{OpenbookQA}    & 5.209 & 6.100 & \textbf{5.090} & 5.240 & 6.100 & \textbf{5.104} \\
\textbf{SciQ}          & 1.528 & 1.621 & \textbf{1.456} & 1.607 & 1.659 & \textbf{1.561} \\
\bottomrule
\end{tabular}
\end{table}

\textbf{Choice of Temperature.}
Next, we study Grad-Mimic under different temperature values and compare them to baseline methods (Mini-batch SGD and Grad-Match).
The results for mislabeled sample experiments are presented in Table~\ref{tab:simulation_exp_temp}.
In these datasets, we set the noise level to 0.3 and fine-tune ViT-B/16 model under linear probing configuration.
We observe that lower temperature values generally lead to higher testing accuracy, as they sharpen the normalization of mimic scores and encourage the model to focus more on high-value samples during training.
Furthermore, Grad-Mimic consistently outperforms both baseline methods across all temperature settings, demonstrating robustness to temperature choice.

\begin{table}[t]
\centering
\caption{Last-layer weights consistently yield the best performance across all datasets.}
\label{tab:layer_weights}
\begin{tabular}{@{}lcccc@{}}
\toprule
& \textbf{DTD} & \textbf{Oxford-IIIT Pet} & \textbf{CIFAR10} & \textbf{CIFAR100} \\ \midrule
Layer 8 & 57.5 & 75.5 & 90.7 & 75.2 \\
Layer 9 & 58.3 & 75.5 & 90.8 & 75.3 \\
Layer 10 & 58.6 & 74.0 & 91.1 & 74.9 \\
Layer 11 & 58.2 & 75.1 & 90.2 & 75.3 \\ \midrule
\textbf{Layer 12 (last layer)} & \textbf{59.5} & \textbf{75.8} & \textbf{91.2} & \textbf{75.5} \\ \bottomrule
\end{tabular}
\end{table}

\begin{table}[t]
\centering
\caption{\textbf{Generalization to Different Reference Models}: Grad-Mimic achieves consistent performance across different reference models, demonstrating low variance and strong robustness.}
\label{tab:generalization}
\begin{tabular}{@{}cccccc@{}}
\toprule
Reference Model Seed & \textbf{CIFAR100} & \textbf{CIFAR10} & \textbf{STL10} & \textbf{Oxford-IIIT Pet} & \textbf{DTD} \\ \midrule
123 & 75.60 & 93.67 & 96.68 & 87.84 & 53.51 \\
213 & 75.64 & 93.69 & 96.70 & 86.37 & 50.69 \\
231 & 76.01 & 93.66 & 96.65 & 86.45 & 51.01 \\
312 & 75.59 & 93.79 & 96.68 & 86.37 & 50.69 \\
321 & 75.63 & 93.63 & 96.95 & 86.40 & 50.69 \\ \midrule
\textbf{Variance} & \textbf{2.53e-06} & \textbf{2.98e-07} & \textbf{1.24e-06} & \textbf{3.35e-05} & \textbf{1.21e-04} \\ \bottomrule
\end{tabular}
\end{table}

\textbf{Extension to LLM and Text-Domain Tasks.}
Grad-Mimic is easily extendable to other modalities.
We conducted a new set of experiments using Pythia-160M~\cite{pmlr-v202-biderman23a} on four text datasets: ARC-Easy~\cite{ai2arc}, ARC-Challenge~\cite{ai2arc}, SciQ~\cite{SciQ}, and OpenBookQA~\cite{OpenBookQA2018}.
We re-use the setup in Sec.~\ref{controllable exp}, we inject noise by flipping ground-truth answers to random distractor options.
We use a noise-free validation set (10\% of the training set) and create the reference models.
We compare Grad-Mimic against standard fine-tuning (SFT) and approximated influence-function-based methods (IF), measuring negative log-likelihood (NLL).
We use the fine-tuned checkpoint's performance to show each method's batch reweighting effectiveness.
Grad-Mimic yields faster convergence than SFT and lower NLL than competing methods.
This confirms that Grad-Mimic is architecture-agnostic by design: Mimic Score depends only on gradient projections and the last-layer weight differences, neither of which is modality-specific.

\textbf{Which layer should be mimicked?}
We study the impact of the layer weights to mimic.
We vary the depth of the reference model weights used to obtain the target vector and evaluate the resulting model performance.
Results are presented in the Table~\ref{tab:layer_weights}.
Grad-Mimic's performance is robust across a wide range of deep layers (Layers 8 $\sim$ 12) with minimal performance variance.
Additionally, we find that mimicking the last-layer weights yields the highest accuracy in most cases.
While the optimal choice may depend on architecture or task, these findings suggest that later representations generally offer the richer information and are better choices for estimating sample utility.

\textbf{Generalization to Different Reference Models.} 
We examine whether Grad-Mimic overfits to a specific reference model by evaluating its sensitivity to reference model variation.
We initialize and train multiple reference models with different random seeds, producing high-performing models in distinct regions of the weight space.
Our online re-weighting algorithm is then used to train a separate model (also initialized with a different seed) on a dataset with 50\% mislabeled samples, which targets different created reference weights.
We evaluate the resulting model with their corresponding reference and compute the performance variance.
As shown in Table~\ref{tab:generalization}, the variance is \emph{extremely low}, indicating that Grad-Mimic generalizes well across reference models.
In other words, this addresses concerns about \textbf{\emph{reference model dependency, as performance remains consistent regardless of which high-performing reference model is used}}.

\textbf{Impact of Reference Model Quality.}
Next, we investigate how the quality of the reference model impacts the effectiveness of Grad-Mimic, particularly in scenarios where \emph{\textbf{only a weaker or distilled model is available}}.
To simulate such scenarios (e.g., using early checkpoints, working on mismatched tasks, or following resource-constrained training recipe), we degrade the reference model in two controlled ways:
\begin{itemize}[nosep,topsep=0pt,leftmargin=*]
    \item \textbf{Weight corruption}: 
    we inject zero-mean Gaussian noise $(0, \sigma^2)$ into the reference model's weights.
    By varying the variance of the added noise, we analyze the performance of models trained with Grad-Mimic against standard training methods that do not leverage a reference model.
    \item \textbf{Data‑limited training}: 
    we train the reference model on smaller fractions of its original dataset (shrinking from 1.3B samples to 3.5M samples).
    Fewer training samples will produce a weaker (or distilled) reference model.
\end{itemize}

We assess Grad‑Mimic in both the mislabeled sample and multimodal settings, applying each degradation method separately.
We present these two analysis in Figure~\ref{fig:noisy reference weights} and Table~\ref{tab:distilled_models}.

As expected, increasing noise variance or reducing the amount of training data degrades the performance of the reference model, which in turn lowers the effectiveness of Grad-Mimic.
Nevertheless, even under significant noise (heavy degradation), models trained with Grad-Mimic still outperform Mini-batch SGD and, in some cases, \emph{even exceed the performance of the degraded reference model itself} (Figure~\ref{fig:noisy reference weights}).
These results suggest that Grad-Mimic is not merely performing distillation.
Instead, it leverages the reference model's geometric positioning to define a generally beneficial optimization direction, enabling the trained model to surpass the reference despite its imperfections.

Moreover, even when the reference model is trained on \textbf{\emph{372$\times$ less data (3.5M vs. 1.3B samples)}}, Grad-Mimic remains highly competitive. 
As shown in Table~\ref{tab:distilled_models}, Grad-Mimic with this significantly smaller proxy achieves an average performance of 0.140, which still outperforms the 0.131 average of vanilla training.
These results address concerns about strong reference model dependency and highlight that \emph{\textbf{while a high-performing reference model is ideal, a weaker proxy can still provide valuable guidance and improve training over traditional baselines.}}

\begin{figure}[t!]
    \begin{center}
    \includegraphics[width=\linewidth]{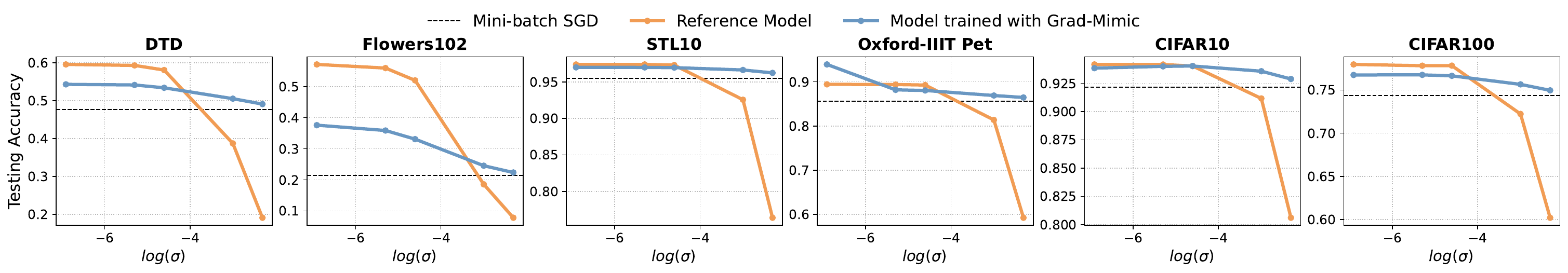}
    \end{center}
    \centering
    \caption{Grad-Mimic's performance is robust to the quality of reference model.}
    \label{fig:noisy reference weights}
\end{figure}

\begin{table}[t!]
\centering
\caption{\textbf{Performance with Weaker Reference Models}: Grad-Mimic achieves competitive performance even when the reference model is trained on significantly fewer samples.}
\label{tab:distilled_models}
\begin{tabular}{@{}llccccc@{}}
\toprule
Training Dataset & Method & ImageNet & \begin{tabular}[c]{@{}c@{}}ImageNet\\dist. shifts\end{tabular} & VTAB & Retrieval & \begin{tabular}[c]{@{}c@{}}Average over\\ 38 datasets ($\uparrow$)\end{tabular} \\ \midrule
1.3B samples & Vanilla Training & 0.026 & 0.035 & 0.139 & 0.114 & 0.131 \\
1.3B samples & Grad-Mimic & 0.026 & 0.037 & 0.162 & 0.118 & 0.146 \\ \midrule
\textbf{3.5M samples {\textcolor{purple}{(372$\times$ fewer)}}} & \textbf{Grad-Mimic} & \textbf{0.026} & \textbf{0.035} & \textbf{0.158} & \textbf{0.115} & \textbf{0.140} \\ \bottomrule
\end{tabular}
\end{table}

\textbf{New Domain Reference Models.}
A natural question arises when applying Grad-Mimic to a new or highly specialized domain where no high-performing reference model is readily available.
This cold-start limitation is common in reference model-based methods~\cite{lin2024rho, mindermann2022prioritized}.
While such scenarios are inevitable, they can be easily mitigated by first training a reference model on a relatively small, hold-out dataset.

Our experiments on mislabeled sample detection validate this approach.
In particular, for out-of-domain tasks such as DTD (texture recognition) and Flowers102 (fine-grained flower classification), we first create reference models by fine-tuning on their noise-free hold-out datasets.
Despite the domain gap, as shown in Table~\ref{tab:stage 1 results in moderate dataset}, Grad-Mimic consistently outperforms competing methods on average.

\textbf{Compared to Human-designed Filters.}
Finally, we compare mimic score-based filters against \emph{basic filtering strategy} in our DataComp experiment.
This competing filter removes samples based on human-defined criteria such as caption length, image size, and caption language, representing the human-designed approach, which requires expensive trial-and-error to establish.
We use the medium-scale dataset as our testbed and present the results in Table~\ref{tab:basic_filtering}.
As shown, while basic filtering improves performance from 0.254 to 0.267 by reducing the dataset from 101.9M to 30M samples, our mimic score-based filter (top 30\% selection) achieves further improvement to 0.268 using 1M fewer samples (29M total).

\begin{table}[t!]
\centering
\caption{\textbf{Comparison with Human-Designed Filters}: Mimic score-based filtering outperforms basic filtering (human-designed heuristics) while using fewer samples (one million samples less).}
\begin{tabular}{@{}lcc@{}}
\toprule
Filtering Strategy & Dataset Size & \begin{tabular}[c]{@{}c@{}}Average over\\ 38 datasets ($\uparrow$)\end{tabular} \\ \midrule
No Filtering & 101.9M & 0.254 \\
Basic Filtering & 30M & 0.267 \\
\textbf{Mimic Score Top 30\%} & \textbf{29M {\textcolor{purple}{(1M $\downarrow$)}}} & \textbf{0.268} \\ \bottomrule
\end{tabular}
\label{tab:basic_filtering}
\end{table}

\section{Discussion and Future Work.}
\label{app:future work}

\textbf{Reference Model Dependency.}
In this work, we argue that the reference model dependency \textbf{\emph{opens up new opportunities that competing methods cannot exploit}}.
\textbf{First}, the modern AI landscape is characterized by a strong \emph{access asymmetry}: well-trained models are routinely released publicly, while the carefully curated datasets used to produce them remain proprietary.
This trend works directly in our favor---the more powerful public models become, the more powerful Grad-Mimic's guidance signal becomes.
\textbf{Second}, even in the cold-start scenario, a reliable proxy can be obtained by training on a smaller hold-out split until convergence---precisely the setup in our mislabeled sample detection experiments.
Table~\ref{tab:distilled_models} validates that a reference trained on 372$\times$ fewer web-crawled samples (3.5M vs. 1.3B) still allows Grad-Mimic to outperform standard training.
Our experiments on LLM and text-domain datasets also confirm this (see Table~\ref{tab:label-noise-nll}): a reference model trained on a 10\%-sized validation set can guide training that consistently outperforms standard fine-tuning and influence-function-based methods. 
These all confirm that the bar for a useful reference is surprisingly \emph{low}.
\textbf{Third}, Grad-Mimic is \emph{robust} to reference model quality.
Injecting increasing levels of Gaussian noise into the reference weights still sees Grad-Mimic outperforms Mini-batch SGD (see Figure~\ref{fig:noisy reference weights}), suggesting it \emph{exploits geometric positioning rather than requiring a perfect reference}.
\textbf{Finally}, \emph{model dependency is strictly easier to satisfy than data dependency}---and unlocks strictly more.
Influence function-based methods require a carefully curated validation set with additional gradient computations, expensive in both data quality and compute.
Grad-Mimic simply requires a publicly available checkpoint and the last-layer weight subtraction, transforming a freely available resource into actionable data-quality insights.

We outline several promising directions that extend the scope and impact of this work, including \emph{cross-architecture alignment, data marketplace design, and membership inference}.

\textbf{Cross-Architecture Alignment.}
Throughout this work, we assume the reference model and the model being trained \emph{share identical architectures}, enabling direct weight-space comparison through simple subtraction.
However, broader scenarios involve leveraging reference models with different architectures.
For example, practitioners may wish to use a large model (e.g., ViT-L/14) to guide the training of a smaller, more efficient variant (e.g., ViT-B/16).
Addressing this setting requires principled alignment of weight spaces across architectures.
Several potential approaches exist, including learning projection mappings~\cite{roberts2024pretrained}, weight interpolation~\cite{ainsworth2022git}, and stitching model blocks~\cite{stoica2023zipit}.
Our framework is compatible with these weight alignment techniques, which permit data-quality distillation from diverse model architectures.

\textbf{Data Marketplace Design.} 
Mimic score provides an efficient measuring tool for individual sample utility to train a new model.
This naturally connects to the field of data marketplaces, where organizations buy and sell training data and a principled valuation mechanism is essential for fair pricing, quality assurance, and incentive design~\cite{huang2023train, chen2019towards, jagadeesan2023competition, chen2022selling, fernandez2020data, azcoitia2022try}.
Our approach raises several important research questions: How stable are mimic scores across different reference models, particularly when buyers target diverse downstream domains simultaneously? How does the quality or capacity of the reference model affect valuation reliability?

Our preliminary results in Appendix~\ref{app:more exp} provide promising evidence, demonstrating strong generalization across reference models and robustness even under strongly degraded references.
Building on these, future work could explore market mechanisms where mimic scores are used to derive bid and ask prices for data, enabling principled price negotiation between buyers and sellers.

\textbf{Membership Inference.}
Our preliminary exploration in Appendix~\ref{app:reverse engineering} suggests that mimic scores can partially predict which individual samples were used to train a reference model, outperforming random selection and achieving 42.6\% overlap with known training sets.
This observation connects Grad-Mimic to the literature on membership inference, which aims to determine whether a specific sample was part of a model's training set---an issue with significant implications for privacy, data provenance, and train-test contamination~\cite{shi2023detecting, golchin2023time, oren2023proving}.

Grad-Mimic departs from prior membership inference methods that rely on overfitting signals such as abnormal confidence scores.
Instead, mimic scores exploit weight-space geometry, measuring a sample's directional compatibility through its gradients and a vector pointing to the reference model.
This geometric perspective offers a fundamentally different mechanism for revealing training set membership and may complement existing inference techniques.

\end{document}